\newcounter{alphasect}
\def\alphainsection{0}
\let\oldsection=\section
\def\section{%
  \ifnum\alphainsection=1%
    \addtocounter{alphasect}{1}
  \fi%
\oldsection}%
\renewcommand\thesection{%
  \ifnum\alphainsection=1%
    \Alph{alphasect}%
  \else%
    \arabic{section}%
  \fi%
}%
\newenvironment{alphasection}{%
  \ifnum\alphainsection=1%
    \errhelp={Let other blocks end at the beginning of the next block.}
    \errmessage{Nested Alpha section not allowed}
  \fi%
  \setcounter{alphasect}{0}
  \def\alphainsection{1}
}{%
  \setcounter{alphasect}{0}
  \def\alphainsection{0}
}%
\newcommand*{\addFileDependency}[1]{
  \typeout{(#1)}
  \@addtofilelist{#1}
  \IfFileExists{#1}{}{\typeout{No file #1.}}
}
\newcommand*{\myexternaldocument}[1]{%
    \externaldocument{#1}%
    \addFileDependency{#1.tex}%
    \addFileDependency{#1.aux}%
}
\newcommand{\bone}{\mathbf{1}}
\newcommand{\cQ}{\mathcal{Q}}
\newcommand{\bA}{\mathbf{A}}
\newcommand{\cA}{\mathcal{A}}
\newcommand{\ba}{\mathbf{a}}
\newcommand{\cB}{\mathcal{B}}
\newcommand{\bb}{\mathbf{b}}
\newcommand{\cC}{\mathcal{C}}
\newcommand{\cD}{\mathcal{D}}
\newcommand{\cP}{P}
\newcommand{\bH}{\mathbf{H}}
\newcommand{\cS}{\mathcal{S}}
\newcommand{\bI}{\mathbf{I}}
\newcommand{\cI}{\mathcal{I}}
\newcommand{\bJ}{\mathbf{J}}
\newcommand{\cO}{\mathcal{O}}
\newcommand{\bp}{\mathbf{p}}
\newcommand{\bQ}{\mathbf{Q}}
\newcommand{\bq}{\mathbf{q}}
\newcommand{\cR}{\mathcal{R}}
\newcommand{\bbR}{\mathbb{R}}
\newcommand{\bS}{\mathbf{S}}
\newcommand{\bt}{\mathbf{t}}
\newcommand{\bv}{\mathbf{v}}
\newcommand{\bx}{\mathbf{x}}
\newcommand{\bz}{\mathbf{z}}
\newcommand{\cV}{\mathcal{V}}
\newcommand{\pnt}{\lambda}
\newcommand{\dclim}{\bar{\lambda}}
\newcommand{\dcrate}{\gamma}
\newcommand{\dcstep}{\bar{M}}
\newcommand{\comment}[1]{}
\newcommand\tikzmark[1]{\tikz[remember picture,overlay]\coordinate (#1);}
\newcommand{\tikzcircle}[2][red,fill=red]{\tikz[baseline=-0.7ex]\draw[#1,radius=#2] (0,0) circle ;}%
\newcommand{\tikzcircleedge}[2][red,fill=white]{\tikz[baseline=-0.7ex]\draw[#1,radius=#2, very thick] (0,0) circle ;}%
\newcommand{\tikzrect}[2][red,fill=red]{\tikz[baseline=0.7ex]\draw[#1,very thick] (0,0) rectangle (0.3,0.3);}%
\newtheorem{definition}{Definition}
\newtheorem{property}{Property}
\newtheorem{claim}{Claim}
\definecolor{turbo}{RGB}{249, 202, 36}
\definecolor{carmine_pink}{RGB}{235, 77, 75}
\definecolor{pure_apple}{RGB}{106, 176, 76}
\definecolor{steel_pink}{RGB}{190, 46, 221}
\definecolor{soaring_eagle}{RGB}{149, 175, 192}
\definecolor{blurple}{RGB}{75, 52, 212}
\definecolor{heliotrope}{RGB}{224, 86, 253}
\definecolor{cyan}{RGB}{0, 255, 255}
\definecolor{lime}{RGB}{0, 255, 0}
\definecolor{middle_blue}{RGB}{126, 214, 223}
\definecolor{midnight_blue}{RGB}{44,62,80}
\definecolor{june_buo}{RGB}{186, 220, 88}
\definecolor{june_buo}{RGB}{186, 220, 88}
\crefname{section}{Sec.}{Secs.}
\Crefname{section}{Section}{Sections}
\Crefname{table}{Table}{Tables}
\crefname{table}{Tab.}{Tabs.}
\begin{document}

\title{A Hybrid Quantum-Classical Algorithm for Robust Fitting}
\author{Anh-Dzung Doan~$^{1}$ \hspace{1em} Michele Sasdelli~$^{1}$ \hspace{1em} David Suter~$^{2}$ \hspace{1em} Tat-Jun Chin~$^{1}$\\
$^{1}$ School of Computer Science, The University of Adelaide, South Australia \\
$^{2}$ Centre for AI\&ML, School of Science, Edith Cowan University, Western Australia}

\maketitle

\begin{abstract}
Fitting geometric models onto outlier contaminated data is provably intractable. Many computer vision systems rely on random sampling heuristics to solve robust fitting, which do not provide optimality guarantees and error bounds. It is therefore critical to develop novel approaches that can bridge the gap between exact solutions that are costly, and fast heuristics that offer no quality assurances. In this paper, we propose a hybrid quantum-classical algorithm for robust fitting. Our core contribution is a novel robust fitting formulation that solves a sequence of integer programs and terminates with a global solution or an error bound. The combinatorial subproblems are amenable to a quantum annealer, which helps to tighten the bound efficiently. While our usage of quantum computing does not surmount the fundamental intractability of robust fitting, by providing error bounds our algorithm is a practical improvement over randomised heuristics. Moreover, our work represents a concrete application of quantum computing in computer vision. We present results obtained using an actual quantum computer (D-Wave Advantage) and via simulation\footnote{Source code: https://github.com/dadung/HQC-robust-fitting}.
\end{abstract}

\comment{
\section*{Storyline}

\begin{itemize}
    \item Robust fitting is fundamentally intractable. Many systems rely on randomised sampling heuristics---fast but no guarantees or error bounds.
    \item Currently a lot of effort on using deep learning. Can leverage statistics in large datasets, but also no optimality guarantees or error bounds, and generalisability is questionable.
    \item We explore a novel computing architecture---quantum annealing---to solve robust fitting.
    \item Core contributions:
    \begin{itemize}
        \item Formulate consensus maximisation in a form that uses set cover (with proof of equivalence), hence allows quantum annealing to be used.
        \item Develop incremental version of the algorithm that allows anytime behaviour, i.e., terminates any time with error bound (an improvement over random sampling). Show proof of this bound.
    \end{itemize}
    \item Experiments        
    \begin{itemize}
        \item Verify incremental algorithm on small-scale synthetic data using 
        \begin{itemize}
            \item Gurobi
            \item D-Wave QPU
        \end{itemize}
        Tune the QPU to always give global set cover results, and/or show error bound in case QPU cannot return global set cover. 
        \item Demonstrate the incremental algorithm on real computer vision data using
        \begin{itemize}
            \item Gurobi
            \item Simulated annealing
        \end{itemize}
        Show that the latter scales better than the former. Always show error bound. Compare against RANSAC.
        \item To improve practical value of the incremental algorithm for computer vision data, we develop a useful heuristic to sample the bases (Dzung's method). Compare performance of incremental algorithm with
        \begin{itemize}
            \item Random basis sampling
            \item Dzung's sampling heuristic
        \end{itemize}
        Use simulated annealing only in this experiment. Compare against RANSAC.
    \end{itemize}
\end{itemize}

\newpage 
}
\section{Introduction}\label{sec:intro}

Imperfections in sensing and processing in computer vision inevitably generate data that contain outliers. Therefore, it is necessary for vision pipelines to be robust against outliers in order to mitigate their harmful effects.

In 3D vision, where a major goal is to recover the scene structure and camera motion, a basic task is to fit a geometric model onto noisy and outlier prone measurements. This is often achieved through the consensus maximisation framework~\cite{tj_book}: given $N$ data points $\cD = \{ \bp_i\}_{i=1}^N$ and a target geometric model parametrised by $\bx \in \mathbb{R}^d$, let $\cP_N$ be the power set of index set $\{1,\dots,N \}$. We aim to solve
\begin{align}\label{eq:maxcon}
\begin{aligned}
     \max_{\cI \in \cP_N,~\bx \in \mathbb{R}^d} \quad & |\cI|\\
    \textrm{s.t.} \quad & r_i(\bx) \le \epsilon \;\;\; \forall i \in \cI,
\end{aligned}
\end{align}
where $r_i(\bx)$ is the residual of point $\bp_i$ w.r.t.~$\bx$, and $\epsilon$ is a given inlier threshold. The form of $r_i(\bx)$ depends on the specific geometric model (more details in Sec.~\ref{sec:reformulating}). A candidate solution $(\cI,\bx)$ consists of a consensus set $\cI$ and its ``witness'' (an estimate) $\bx$, where the points in $\cI$ are the inliers of $\bx$. Problem~\eqref{eq:maxcon} seeks the maximum consensus set $\cI^\ast$, whose witness $\bx^\ast$ is a robust estimate of the model\footnote{This also depends on using a correct $\epsilon$. The large volume of works that apply consensus maximisation suggest setting $\epsilon$ is usually not a concern.}.

Many computer vision systems employ random sampling heuristics, i.e., RANSAC~\cite{ransac} and its variants (e.g.,~\cite{tran2014sampling, pham2011simultaneous,loransac,graphcutrs,magsacplusplus, mlesac}), for consensus maximisation. The basic idea is to repeatedly fit the model on randomly sampled minimal subsets of $\cD$, and return the $\tilde{\bx}$ with the largest consensus set $\tilde{\cI}$. Such heuristics can only approximate~\eqref{eq:maxcon} and generally do not provide optimality guarantees or error characterisation, e.g., a tight bound on the discrepancy $|\cI^\ast| - |\tilde{\cI}|$. Moreover, $\tilde{\bx}$ is subject to randomness, and postprocessing or reruns are often executed to vet the result.

Unfortunately, consensus maximisation is provably intractable~\cite{tj18nphard,antonante2020outlier}, hence there is little hope in finding efficient algorithms that can solve it exactly. While there has been active research into globally optimal algorithms~\cite{chin2015efficient,cai2019consensus,li2009consensus,parra2015guaranteed,olsson2008polynomial}, such techniques are realistic only for small input instances (small $d$, $N$ and/or number of outliers~\cite{chin2015efficient}).

Bridging the gap between exact algorithms that are costly and randomised heuristics that offer no quality assurances is an important research direction in robust fitting with practical ramifications. Towards this aim, deterministic approximate algorithms~\cite{huu2017ep,zhipeng2018ibco, purkait2018maximum, le2018non,wen2019efficient} eschew exhaustive search (e.g., branch-and-bound) and randomisation, and instead adopt deterministic subroutines such as convex optimisation, proximal splitting, etc. These methods avoid the vagaries of random sampling, and some can even guarantee convergence~\cite{huu2017ep,purkait2018maximum,le2018non}. However, none of them provide error bounds. Indeed, complexity results~\cite{tj18nphard,antonante2020outlier} also preclude efficient approximate solutions with error bounds.

Partly buoyed by the dominance of deep learning in computer vision, learning-based solutions to robust geometric fitting have been developed~\cite{brachmann2017dsac,ranftl2018deep,truong2021unsupervised}. Such techniques leverage statistics in large datasets to learn a mapping from the input instance to the desired solution. Despite showing promising results in benchmark datasets, learning methods do not provide optimality guarantees and error bounds. Whether the learned model can generalise is also a concern.


To summarise, existing algorithms for robust fitting, particularly those targeted at consensus maximisation, have yet to satisfactorily solve the problem. It is thus worthwhile to investigate novel approaches based on new insights.

\vspace{-1em}
\paragraph{Our contributions}

We propose a new approach that leverages quantum computing for consensus maximisation. Our core contribution is a consensus maximisation algorithm that iteratively solves a sequence of integer programs and terminates with either $\bx^\ast$ or a suboptimal solution $\tilde{\bx}$ with a known error bound $|\cI^\ast| - |\tilde{\cI}| \le \rho$. The integer programs are amenable to a quantum annealer~\cite[Chap.~8]{scherer2019mathematics}, which is utilised to tighten the bound efficiently. Since our method employs convex subroutines and random sampling, it is a \emph{hybrid quantum-classical} algorithm~\cite{gyongyosi2019survey, perdomo2018opportunities, bravyi2020hybrid, kim2021advancing}.

We will present results using an actual quantum computer, the D-Wave Advantage~\cite{dwaveadvantage1.1}, as well as simulation. While our technique does not yet outperform state-of-the-art algorithms, in part due to the limitations of current quantum technology, our work represents a concrete application of quantum computing in computer vision. We hope to inspire future efforts on this topic in the community.


\vspace{-0.5em}
\section{Related work}\label{sec:related_works}

In Sec.~\ref{sec:intro}, we have provided an overview of robust fitting and recent algorithmic advances. We thus focus our survey on quantum computing in computer vision.

Many quantum methods have been proposed for image processing~\cite{dendukuri2018image,venegas2003storing,caraiman2012image,yan2016survey}, image recognition~\cite{neven2012qboost,nguyen2018image,dendukuri2019defining}, and object detection~\cite{li2020quantum}. Also, several methods explored the tasks of classification and training a deep neural network~\cite{sasdelli2021quantum,khoshaman2018quantum, nguyen2021deep}. Recently, Golyanik and Theobalt~\cite{golyanik2020quantum} proposed a practical quantum algorithm for rotation estimation to align two point sets. Their basic idea is to discretise rotation matrices to formulate the problem to the quadratic unconstrained binary optimization (QUBO), which can be solved by quantum annealers. Benkner et al.~\cite{benkner2020adiabatic} proposed to solve the graph matching problem through formulating the quadratic assignment problem (QAP) to the QUBO using the penalty approach. They conducted the experiment and provided analysis on the quantum computer D-Wave 2000Q. However, the limitation of quantum computers precluded them from solving large problems. To address this issue, instead of enforcing a penalty to QAP, Q-Match~\cite{seelbach2021q} was proposed to iteratively select and solve subproblems of QAP, which allows D-Wave annealers to efficiently deal with large problems. Another interesting work is QuantumSync~\cite{birdal2021quantum}, which addresses the synchronisation problem in the context of multi-image matching. This work carefully formulated the synchronisation problem to the QUBO, which was then validated on D-Wave Advantage. 


The closest work to ours is~\cite{TJ_ACCV_2020}, who proposed a quantum solution for robust fitting. However, there are non-trivial differences: first,~\cite{TJ_ACCV_2020} estimates \emph{per-point influences} (a measure of outlyingness)~\cite{tennakoon2021consensus,suter2020monotone} for outlier removal instead of consensus maximisation. Second, their algorithm is based on the gate computing model, which is fundamentally different from the quantum annealing approach adopted in our work. Third, the results in~\cite{TJ_ACCV_2020} are only based on simulation; we will compare against~\cite{TJ_ACCV_2020} on this basis in Sec.~\ref{sec:experiments}.

\section{Reformulating consensus maximisation}\label{sec:reformulating}

In this section, we describe our novel reformulation for consensus maximisation and relevant theoretical results, before presenting the usage of quantum annealing in Sec.~\ref{sec:qubo_for_scp} and the overall algorithm in Sec.~\ref{sec:main_algo}.

\subsection{Preliminaries}\label{sec:prelim}

Following~\cite{TJ_ACCV_2020}, we consider residuals $r_i(\bx)$ that are quasiconvex, which encapsulates many geometric models of interest in computer vision~\cite{kahl08}. Formally, if the set
\begin{align}
\{ \bx \in \mathbb{R}^d \mid r_i(\bx) \le \alpha \}
\end{align}
is convex for all $\alpha \ge 0$, then $r_i(\bx)$ is quasiconvex. Note that assuming quasiconvex residuals does not reduce the computational hardness of consensus maximisation~\cite{tj18nphard}.

For $\cC \in \cP_N$, define the minimax problem
\begin{equation}\label{eq:minimax}
    g(\cC) = \min_{\bx \in \mathbb{R}^d} \; \max_{i \in \cC} \;  r_i(\bx).
\end{equation}
For quasiconvex $r_i(\bx)$,~\eqref{eq:minimax} is a quasiconvex program~\cite{eppstein,amenta}, which is polynomial-time solvable. Note that $g(\cC) \le \epsilon$ implies that $\cC$ is a consensus set, since all the points in $\cC$ are within error $\epsilon$ to the extremiser of~\eqref{eq:minimax}.

Define the ``feasibility test''
\begin{align}\label{eq:feasibility}
f(\cC) = \begin{cases} 0 & \textrm{if}~g(\cC) \le \epsilon; \\ 1 & \textrm{otherwise}. \end{cases}
\end{align}
Any $\cC$ such that $f(\cC) = 0$ implies that $\cC$ is a consensus set.

\noindent Problem~\eqref{eq:maxcon} can thus be restated as
\begin{align}\label{eq:maxcon2}
\begin{aligned}
     \max_{\cI \in \cP_N} \quad & |\cI|, \quad & 
    \textrm{s.t.} \quad & f(\cI) = 0,
\end{aligned}
\end{align}
with the witness $\bx$ for any feasible $\cI$ obtainable through computing $g(\cI)$ to evaluate $f(\cI)$.

Given a consensus set $\cI$ with witness $\bx$, the points in the complement $\cO = \{1,\dots,N\}\setminus \cI$ are the outliers to $\bx$. The ``dual'' of problem~\eqref{eq:maxcon2} is therefore
\begin{align}\label{eq:maxcon3}
\begin{aligned}
     \min_{\cO \in \cP_N} \quad & |\cO|, \quad &
    \textrm{s.t.} \quad & f(\{1,\dots,N\}\setminus \cO) = 0,
\end{aligned}
\end{align}
i.e., find the model with the least number of outliers.

\begin{definition}[True inliers and true outliers]
Let $\cI^\ast$ be the maximum consensus set and $\cO^\ast = \{1,\dots,N\}\setminus \cI^\ast$. We call $\cI^\ast$ the ``true inliers'' and $\cO^\ast$ the ``true outliers''.
\end{definition}

\begin{property}[Monotonicity]\label{prop:monotonicity}
For~\eqref{eq:minimax} with quasiconvex residuals, given subsets $\mathcal{P}, \cQ, \cR \in \cP_N$ with $\mathcal{P} \subseteq \cQ \subseteq \cR$, we have $g(\mathcal{P}) \le g(\cQ) \le g(\cR)$.
By extension, we also have that $f(\mathcal{P}) \le f(\cQ) \le f(\cR)$.
See~\cite{eppstein,amenta} for more details.
\end{property}

Intuitively, adding points to a feasible subset can only potentially make it infeasible; the converse cannot be true. This leads to the following crucial concept.


\begin{definition}[Basis]\label{def:basis}
A basis $\cB \subset \{1,\dots,N\}$ is a subset such that $g(\cB') < g(\cB)$ for every $\cB' \subset \cB$.
\end{definition}

Intuitively, removing any point from a basis $\cB$ will cause the minimax value of the subset to shrink.

\begin{property}[Combinatorial dimension]\label{prop:combdim}
The combinatorial dimension $\delta$ of minimax problem~\eqref{eq:minimax} is the upper bound on the size of bases~\cite{eppstein,amenta}. For quasiconvex $r_i(\bx)$, $\delta = 2d+1$.
\end{property}

\begin{claim}\label{claim:true_outlier_in_basis}
If basis $\cB$ is infeasible, i.e., $f(\cB) = 1$, then $\left|\cB \cap \cO^\ast \right| \ge 1$, i.e., an infeasible basis $\cB$ contains at least one true outlier.
\end{claim}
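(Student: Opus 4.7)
My plan is to prove the contrapositive: if $\cB \cap \cO^\ast = \emptyset$, then $\cB$ must be feasible, i.e.\ $f(\cB) = 0$. This will directly contradict the hypothesis $f(\cB) = 1$, and the basis structure of $\cB$ actually will not be needed -- monotonicity alone will suffice, which is why the claim holds in fact for any subset of $\{1,\dots,N\}$, not just for bases.

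The argument proceeds in three short steps. First, I would note that $\{1,\dots,N\} = \cI^\ast \cup \cO^\ast$ is a disjoint partition by the definition of true inliers and true outliers, so the assumption $\cB \cap \cO^\ast = \emptyset$ is equivalent to $\cB \subseteq \cI^\ast$. Second, invoking Property~\ref{prop:monotonicity} on the chain $\cB \subseteq \cI^\ast$ gives $f(\cB) \le f(\cI^\ast)$. Third, since $\cI^\ast$ is by definition the maximum consensus set, it satisfies $g(\cI^\ast) \le \epsilon$ and so $f(\cI^\ast) = 0$; combined with $f$ taking values in $\{0,1\}$, this forces $f(\cB) = 0$.

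This contradicts the standing assumption $f(\cB) = 1$, so the supposition $\cB \cap \cO^\ast = \emptyset$ must fail and $|\cB \cap \cO^\ast| \ge 1$ follows.

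There is essentially no hard step here; the claim is an almost immediate consequence of monotonicity of $f$. The only thing to be careful about is the logical structure -- phrasing it cleanly as contrapositive/contradiction -- and remembering that feasibility is a $\{0,1\}$-valued quantity so that the monotonicity inequality $f(\cB) \le f(\cI^\ast)$ collapses to the equality $f(\cB) = 0$ rather than only bounding it.
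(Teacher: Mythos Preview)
Your proof is correct and follows essentially the same approach as the paper: both arguments hinge on monotonicity of $f$ together with $f(\cI^\ast)=0$, and both conclude by contradiction that $\cB$ cannot be contained in $\cI^\ast$. Your version applies monotonicity directly to the chain $\cB \subseteq \cI^\ast$, whereas the paper routes through $\cI^\ast \cup \cB$ (observing $f(\cI^\ast \cup \cB)=1$ and noting this would equal $f(\cI^\ast)=0$ if $\cB \subseteq \cI^\ast$); these are the same idea and your phrasing is arguably the more direct of the two.
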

\begin{proof}
See Sec.~\ref{supp:sec:proof_true_outlier_in_basis} in supplementary material.
\end{proof}

\comment{
    \begin{figure}[h]
        \centering
        \mbox
        {
            \subfloat[][]
            {
                \includegraphics[width=0.2\textwidth]{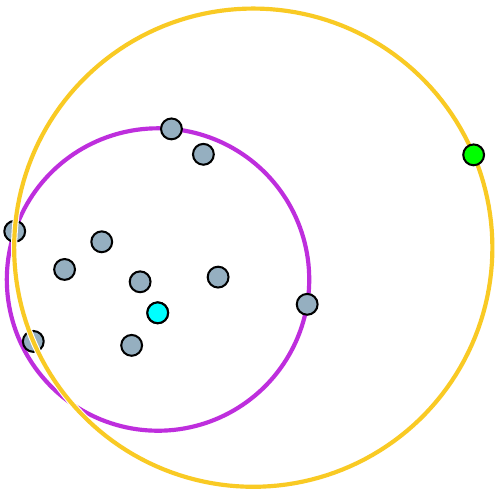}.
                \label{fig:monotonicity}
            }
            \hspace{1em}
            \subfloat[][]
            {
                \includegraphics[width=0.2\textwidth]{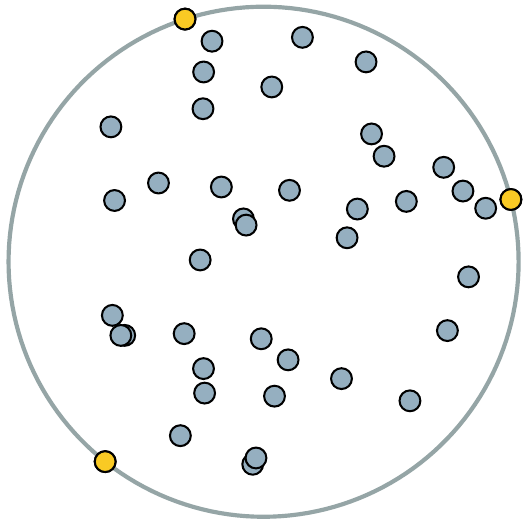}.
                \label{fig:basis}
            }
        }
        
        \caption{Suppose we wish to find a mininum enclosing ball (MEC), where $r_i(\bx) = ||\bx -\bp_i||_2$, $\bx$ and $g(\cdot)$ are the center and radius of MEC. \textbf{(a)} The MEC of the set $\mathcal{P} = \{ \tikzcircle[black, fill=soaring_eagle]{2pt}\}$ is~\tikzcircle[steel_pink, fill=white]{4pt}, which is feasible. If we add new point \tikzcircle[black, fill=cyan]{2pt} to $\mathcal{P}$, the MEC~\tikzcircle[steel_pink, fill=white]{4pt} does not change. However, if we add new point~\tikzcircle[black, fill=lime]{2pt} to~$\mathcal{P}$, we will obtain the new MEC~\tikzcircle[turbo, fill=white]{4pt} with larger radius $g(\cdot)$, which is potentially infeasible.
        Also, let $\mathcal{R} = \{ \tikzcircle[black, fill=soaring_eagle]{2pt}, \tikzcircle[black, fill=cyan]{2pt}, \tikzcircle[black, fill=lime]{2pt} \}$,  $\mathcal{Q} = \{ \tikzcircle[black, fill=soaring_eagle]{2pt}, \tikzcircle[black, fill=cyan]{2pt} \}$, $\mathcal{P} = \{ \tikzcircle[black, fill=soaring_eagle]{2pt} \}$. Clearly, $\cP \subseteq \cQ \subseteq \mathcal{R}$, and $g(\mathcal{P}) \leq g(\cQ) \leq g(\cR)$. \textbf{(b)} The basis $\cB = \{ \tikzcircle[black, fill=turbo]{2pt} \}$ is the subset of all data points $\{\tikzcircle[black, fill=soaring_eagle]{2pt},  \tikzcircle[black, fill=turbo]{2pt}\}$. Any subset $\cB^\prime$ of $\cB$ will always have a MEC such that $g(\cB^\prime) < g(\cB)$. }
    \end{figure}
}

\subsection{Hypergraph vertex cover} \label{sec:vertex_cover}

Define the binary $N$-vector
\begin{align}
\bz = \left[ z_1, \dots, z_N \right] \in \{ 0,1\}^N,
\end{align}
where the set of indices corresponding to nonzero $z_i$'s are
\begin{align}
\cC_\bz = \{ i \in \{1,\dots,N\} \mid z_i = 1 \}.
\label{eq:O_from_z}
\end{align}


\noindent The outlier minimisation problem~\eqref{eq:maxcon3} can be reexpressed as
\begin{align}\label{eq:maxcon4}
\begin{aligned}
     \min_{\bz \in \{0,1\}^N} \quad & \| \bz \|_1, \quad &
    \textrm{s.t.} \;\; & f(\{1,\dots,N\}\setminus\cC_\bz) = 0,
\end{aligned}
\end{align}
where $z_i = 1$ implies the $i$-th point is removed as an outlier.

Let $\{ \bb_{(k)} \}^{K}_{k=1}$ be $K$ binary $N$-vectors that correspond to \emph{all infeasible bases} of the problem, i.e., for each $k$,
\begin{align}
    f(\cC_{\bb_{(k)}}) = 1, \;\;\;\; \|\bb_{(k)}\|_1 \le \delta,
\end{align}
where the latter appeals to the combinatorial dimension (Property~\ref{prop:combdim}). Also, the number of infeasible bases $K = O(N^\delta)$. Define the hypergraph $H$ with vertex set $\cV$ and hyperedge set $E$ respectively as
\begin{align}\label{eq:hypergraph}
    H = \{ \cV, E \}, \;\; \cV = \{1,\dots,N\}, \;\; E = \{ \cC_{\bb_{(k)}} \}^{K}_{k=1}.
\end{align}
Recall that hypergraphs are a generalisation of graphs, where a hyperedge can be incident with more than two vertices~\cite{agarwal2005beyond}. In our hypergraph~\eqref{eq:hypergraph}, each hyperedge connects vertices that form an infeasible basis; see Fig.~\ref{fig:dataspace_hypergraph}.

\begin{claim}\label{mainclaim}
A subset $\cI \subseteq \cV$ is a consensus set \emph{iff} it is an independent set of hypergraph $H$.
\end{claim}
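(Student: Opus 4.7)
The plan is to prove both directions by contradiction, relying on Property~\ref{prop:monotonicity} and the definition of basis. Throughout, recall that a subset $\cI \subseteq \cV$ is independent in $H$ iff no hyperedge of $H$ is contained in $\cI$, i.e., iff no infeasible basis is a subset of $\cI$.

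For the forward direction ($\Rightarrow$), I would suppose that $\cI$ is a consensus set, so $f(\cI) = 0$. If some hyperedge $\cC_{\bb_{(k)}} \subseteq \cI$, then monotonicity gives $f(\cC_{\bb_{(k)}}) \le f(\cI) = 0$. But by construction every hyperedge in $E$ corresponds to an infeasible basis, so $f(\cC_{\bb_{(k)}}) = 1$, a contradiction. Hence $\cI$ contains no hyperedge and is therefore independent.

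For the reverse direction ($\Leftarrow$), I would suppose that $\cI$ is independent and argue $f(\cI) = 0$. Assume for contradiction that $f(\cI) = 1$. The key step is to extract from the infeasible set $\cI$ an \emph{infeasible basis} lying inside $\cI$: start with $\cB := \cI$, and repeatedly remove any point $i \in \cB$ for which $g(\cB \setminus \{i\}) = g(\cB)$. This loop must terminate (the set only shrinks and $g$ is unchanged at each step), and the resulting $\cB$ satisfies $g(\cB') < g(\cB)$ for every $\cB' \subset \cB$, hence is a basis in the sense of Definition~\ref{def:basis}. Because $g$ was preserved throughout the removals, $g(\cB) = g(\cI) > \epsilon$, so $\cB$ is infeasible; and by Property~\ref{prop:combdim}, $|\cB| \le \delta$. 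Therefore $\cB$ is one of the $\bb_{(k)}$'s, i.e., $\cC_\bB$ is a hyperedge of $H$ contained in $\cI$, contradicting independence.

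The main obstacle is the reverse direction: one must justify that \emph{every} infeasible set contains an infeasible basis in the strict sense used by Definition~\ref{def:basis}. This hinges on the subtle point that the ``peeling'' procedure terminates in a set whose every proper subset has strictly smaller minimax value (not merely a set that is minimally infeasible w.r.t.\ the $\epsilon$ threshold). Monotonicity is what guarantees both that the procedure is well-defined and that $g$ on the residual set equals $g(\cI)$, so infeasibility is preserved. Once this is in place, combining with Property~\ref{prop:combdim} to bound $|\cB| \le \delta$ closes the argument, since the hyperedge set $E$ was defined to include \emph{all} infeasible bases.
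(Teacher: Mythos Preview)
Your proposal is correct and lands on the same contradiction as the paper in both directions. The forward direction is identical (monotonicity forces every subset of a consensus set to be feasible, so no hyperedge can sit inside $\cI$).

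For the reverse direction, the paper takes a slightly different but equivalent route: it invokes LP-type problem theory~\cite{eppstein,amenta} to assert directly that any set $\cI$ has a \emph{support set} $\bar{\cS}\subseteq\cI$ which is a basis with $g(\bar{\cS})=g(\cI)$; if $g(\cI)>\epsilon$ this $\bar{\cS}$ is then an infeasible basis, hence a hyperedge inside $\cI$, contradicting independence. Your peeling procedure is precisely the elementary construction of that support set, so you are reproving the cited fact rather than quoting it. One small point worth making explicit in your write-up: when peeling terminates you only know that removing any \emph{single} point strictly decreases $g$; to match Definition~\ref{def:basis} you need every proper subset to have strictly smaller value, which follows immediately from monotonicity since any proper $\cB'\subset\cB$ is contained in some $\cB\setminus\{i\}$. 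With that line added, your argument is self-contained and arguably cleaner than the paper's, which carries some extra scaffolding (the set $\Omega$ of all $\delta$-subsets and the maximiser $\cS_\ast$) that is not actually needed once the support-set existence is in hand.
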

\begin{proof}
See Sec.~\ref{supp:sec:proof_main_claim} in supplementary material.
\end{proof}

Claim~\ref{mainclaim} proves that finding the maximum consensus set $\cI^\ast$ is equivalent to finding the maximum independent set of $H$. Since the complement of an independent set is a vertex cover, it justifies to minimise the vertex cover
\begin{align}\label{eq:scp_ilp}\tag{VC}
\begin{aligned}
     \min_{\bz \in \{0,1\}^N} \quad & \| \bz \|_1 \\
    \textrm{s.t.} \quad & \bb_{(k)}^T \bz\ge 1, \;\;\; \forall k = 1,\dots,K,
\end{aligned}
\end{align}
which is an 0-1 integer linear program (ILP). Setting $z_i = 1$ implies removing the $i$-th vertex, and the constraints ensure that all hyperedges are ``covered'', i.e., at least one vertex in each hyperedge is removed (cf.~Claim~\ref{claim:true_outlier_in_basis}).

The hypergraph formalism has been applied previously in geometric fitting~\cite{agarwal2005beyond,purkait2016clustering,liu2012efficient,lin2019hypergraph}. However, the target problem in~\cite{agarwal2005beyond,purkait2016clustering,liu2012efficient,lin2019hypergraph} was higher order clustering (e.g., via hypergraph cuts), which is very distinct from our aims.

Formulation~\eqref{eq:scp_ilp} is impractical for two reasons:
\begin{itemize}[leftmargin=1em,itemsep=2pt,parsep=0pt,topsep=2pt]
    \item Hypergraph vertex cover is intractable;
    \item The number of hyperedges in $H$ is exponential.
\end{itemize}
However, the form~\eqref{eq:scp_ilp} is amenable to a quantum annealer, as we will show in Sec.~\ref{sec:qubo_for_scp}. To deal with the number of hyperedges, we propose a hybrid quantum-classical algorithm in Sec.~\ref{sec:main_algo} that incrementally generates hyperedges.

\begin{figure*}[th]
    \centering
    \mbox{
        \subfloat[  ][ {Data points with index set $\{1, \dots, 12 \}$. Infeasible bases $\tikzcircleedge[carmine_pink, fill=white]{4pt} = \{1,5,6\}$, $\tikzcircleedge[middle_blue]{4pt} = \{ 2,7,8\}$, $\tikzcircleedge[pure_apple, fill=white]{4pt} = \{3,9,10 \}$, $\tikzcircleedge[heliotrope]{4pt} = \{4,11,12 \}$, $\tikzcircleedge[turbo, fill=white]{4pt} = \{1,2,3 \}$, $\tikzcircleedge[midnight_blue, fill=white]{4pt} = \{2,3,4 \}$. Outlier set $\cO = \{1,2,3,4\}$. Consensus set $\cI = \{5, 6, 7, 8, 9, 10, 11, 12 \}$. } ]{
            \includegraphics[width=0.3\textwidth]{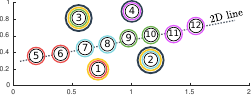}.
            \label{fig:dataspace}
        }
        
        \hspace{2em}
        \subfloat[][ {Hypergraph with vertex set $\cV = \{1, \dots, 12 \}$. Hyperedges $\tikzrect[carmine_pink]{4pt} = \{1,5,6\}$, $\tikzrect[middle_blue]{4pt} = \{ 2,7,8\}$, $\tikzrect[pure_apple]{4pt} = \{3,9,10 \}$, $\tikzrect[heliotrope]{4pt} = \{4,11,12 \}$, $\tikzrect[turbo]{4pt} = \{1,2,3 \}$, $\tikzrect[midnight_blue]{4pt} = \{2,3,4 \}$. Vertex cover $ = \{1,2,3,4\}$. Independent set $=\{5, 6, 7, 8, 9, 10, 11, 12 \}$ }  ]{
            \includegraphics[width=0.5\textwidth]{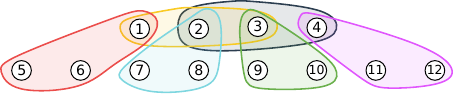}
            \label{fig:hypergraph}
        }
    }
    \vspace{-1em}
    \caption{(a) Suppose we fit a line $\bx \in \bbR^2$ on $N = 12$ points $\{(a_i, b_i) \}^N_{i=1}$ on the plane, with residual $r_i(\bx) = \begin{Vmatrix}b_i -\bx[a_i,1] \end{Vmatrix}_2$. The maximum consensus size is $8$. Six infeasible bases are also plotted (colour coded, note that there are in total 281 infeasible bases). (b) Equivalent  hypergraph $H$ for (a) based on our construction (Sec.~\ref{sec:vertex_cover}). The infeasible basis $\equiv$ hyperedge. Claim~\ref{mainclaim} implies the maximum consensus set $\equiv$ maximum independent set, and the minimum outlier set $\equiv$ minimum vertex cover.}
    \label{fig:dataspace_hypergraph}
\end{figure*}

\section{Quantum solution}\label{sec:qubo_for_scp}

We first provide a basic introduction to quantum annealing, before describing our quantum treatment of~\eqref{eq:scp_ilp}.

\subsection{Quantum annealing}\label{sec:background_quantum_annealing}

A quantum annealer solves optimisation problems through energy minimisation of a physical system. A Hamiltonian defines the energy profile of a quantum system, which is composed of a number of interacting qubits. The system's state is initialised at the lowest energy of the initial Hamiltonian and annealed  such that the its final state gives the desired solution.
At the end of the annealing,
the Hamiltonian can be obtained from
the following model
\begin{align}\label{eq:bin_quad}
    \sum_{n} Q_{nn}q_n  + \sum_{n < m} Q_{nm} q_n q_m =  \bq^T \bQ \bq.
\end{align}
The measurement collapses the 
$N$-qubit quantum state
into $\bq = \left[q_1, q_2, \dots, q_N\right]$, where 
$q_n \in \{0,1\}$, $\bQ \in \mathbb{R}^{N \times N}$.
The elements of $\bQ$ define the couplings between qubits  and their biases; see~\cite[Chap.~8]{scherer2019mathematics} for more details.

A quantum annealer solves a problem of the form
\begin{align}\label{eq:generic_qubo}
    \min_{\bq \in \{0,1\}^N} \bq^T\bQ\bq,
\end{align}
which is the quadratic unconstrained binary optimisation (QUBO). QUBO is intractable on a classical machine,
but a quantum annealer, by virtue of the physical processes described above, \emph{may} solve the problem efficiently. It allows $N$-qubits to evolve
through superposed and entangled states
(quantum tunnelling), and $\bq$ is obtained from the final measurement; see Sec.~\ref{sec:practical} on practical limitations.

\subsection{Hypergraph vertex cover as QUBO}

To simplify description of the main algorithm in Sec.~\ref{sec:main_algo}, we first generalise~\eqref{eq:scp_ilp}. Let $A$ be a subset of the hyperedges $E$ of the hypergraph $H$:
\begin{align}
    A = \{ \cC_{\ba_{(m)}} \}^{M}_{m=1} \subseteq E = \{ \cC_{\bb_{(k)}} \}^{K}_{k=1}.
\end{align}
Define the 0-1 ILP
\begin{align}\label{eq:scp_ilp_matrix}
    \begin{aligned}
        I(A) =  \min_{\bz \in \{0,1\}^N} \quad & \| \bz \|_1, \quad &
        \textrm{s.t.} \quad & \bA^T\bz \geq \bone_M,
    \end{aligned}
\end{align}
where $\bA \in \{0,1\}^{N \times M}$ is obtained by horizontally stacking the binary $N$-vectors $\{ \ba_{(m)} \}^{M}_{m=1}$ corresponding to the hyperedges in $A$, and $\bone_M$ is the vector of $M$ ones.


We can recover~\eqref{eq:scp_ilp} from~\eqref{eq:scp_ilp_matrix} by setting $A = E$. Moroever, since $A \subseteq E$, it is clear that $I(A) \le I(E).$

To formulate~\eqref{eq:scp_ilp_matrix} as a QUBO, we first convert the inequalities into equalities. Define $\delta^\prime = \delta - 1$. For each constraint $\ba_{(m)}^T\bz \ge 1$ in~\eqref{eq:scp_ilp_matrix}, we incorporate $\delta^\prime$ binary slack variables $\bt_{(m)} = \left[ \begin{matrix} t_{m,1} & \dots & t_{m,\delta^\prime} \end{matrix} \right]^T$ into the constraint
\begin{align}
    \ba_{(m)}^T\bz - \bt_{(m)}^T\bone_{\delta^\prime} = 1;
\end{align}
recall that each $\ba_{(m)}$ has exactly $\delta$ elements with value $1$. All $M$ equality constraints can be expressed in matrix form

\begin{align}\label{eq:Hconstraints}
    \bH_A \left[ \begin{matrix}\bv^T & 1 \end{matrix} \right]^T = 0, 
\end{align}
where
$
\bv = \left[ \begin{matrix} \bz^T & \bt_{(1)}^T & \dots & \bt_{(m)}^T & \dots & \bt_{(M)}^T \end{matrix} \right]^T \in \{0,1\}^{N+\delta^\prime M}$, 
$\bH_A = \begin{bmatrix} \bA^T && -\bS && -\bone_M\end{bmatrix} \in \{0,1\}^{M\times(N+\delta^\prime M+1)}$, 
$\bS = \bI_M \otimes \bone_{\delta^\prime}^T$, the $M \times M$ identity matrix $\bI_M$, and Kronecker product $\otimes$. Also, the objective $\| \bz \|_1$ can be expressed in the quadratic form
\begin{align}
\left[ \begin{matrix}\bv^T & 1 \end{matrix}\right]\bJ \left[ \begin{matrix}\bv^T & 1 \end{matrix} \right]^T,
\end{align}
with
$
\bJ =\begin{bmatrix} \bI_N && \mathbf{0}_{N \times (\delta^\prime M+1)} \\ \mathbf{0}_{(\delta^\prime M+1) \times N} && \mathbf{0}_{(\delta^\prime M+1) \times (\delta^\prime M+1)} \end{bmatrix},
$
where $\mathbf{0}$ is a zero matrix with the size specified in its subscript, and $\bI_N$ is the $N\times N$ identity matrix. 



With penalty parameter $\pnt>0$, we lift the constraints~\eqref{eq:Hconstraints} into the objective to yield the QUBO
\begin{align} 
    \begin{aligned}
        Q_\pnt(A) = \min_{\bv \in \{0,1\}^{N+\delta^\prime M}}  & \left[ \begin{matrix}\bv^T  \, 1 \end{matrix}\right] (\bJ + \pnt\bH_A^T\bH_A) \left[ \begin{matrix}\bv^T \, 1 \end{matrix}\right]^T.
    \end{aligned}
    \label{eq:scp_qubo}
\end{align}
Further algebraic manipulation is required to remove the constant $1$ from~\eqref{eq:scp_qubo} before exactly matching~\eqref{eq:generic_qubo}; see Sec.~\ref{supp:sec:reformulate_qubo_to_dwave} in the supp.~material for details. In the following, we will discuss solving~\eqref{eq:scp_qubo} using a quantum annealer.



\subsection{Practical considerations and limitations}\label{sec:practical}

We frame the discussion here in the context of SOTA quantum annealer---the D-Wave Advantage~\cite{dwaveadvantage1.1}.

\vspace{-1em}
\paragraph{Challenges}

Problem~\eqref{eq:scp_qubo} is an application of the quadratic penalty method~\cite[Chap.~17]{nocedalandwright}. While fundamental results exist that allow $Q_\lambda(A)$ to equal $I(A)$, they invariably require $\lambda$ to approach a large value. However, the precision of D-Wave Advantage is limited to 4-5 bits~\cite{ice,dorband2018extending}, which precludes the usage of large penalty parameters.

Second, although there are $>$5000 qubits in D-Wave Advantage, the topology of quantum processing unit (QPU) rules out a fully connected model, i.e., the $\bQ$ matrix allowable is not dense~\cite{dattani2019pegasus, neven2009nips}. Given an arbitrary $\bQ$, a \emph{minor embedding} step~\cite{robertson1995graph, cai2014practical,boothby2016fast} is required to map the QUBO onto the QPU topology. The embedding consumes extra physical qubits reducing the number of physical qubits available.

As alluded in Sec.~\ref{sec:background_quantum_annealing}, the annealing process ``gradually'' transitions (NB: by human scale the transition is rapid) the quantum system from the initial Hamiltonian to the final Hamiltonian. Current quantum annealers are not able to completely isolate external noise from the process, which affects the quality of the solution.

To obtain a useful solution, during the annealing process, the system must have a non-negligible probability of staying in the lowest energy state. If the system jumps to a higher energy state, it will fail in solving the QUBO~\eqref{eq:scp_qubo} optimally. The spectral gap is the minimum gap between the lowest and the second lowest (higher) energy states, which affects the probability of staying in the lowest energy state; see~\cite{farhi2000quantum,whatisqadwave} for details. We will investigate the spectral gap issue in Sec.~\ref{supp:sec:spectral_gap} of the supplementary material.



\vspace{-1em}
\paragraph{Why quantum annealing?}

The above issues limit the scale of problems and quality of solutions attainable with current quantum annealers. However, quantum technology is advancing steadily, and the vision community should be prepared for potential breakthroughs, as like-minded colleagues are also advocating~\cite{golyanik2020quantum, benkner2020adiabatic, seelbach2021q,birdal2021quantum, TJ_ACCV_2020, sasdelli2021quantum}. Moreover, our main algorithm combines quantum and classical computation to leverage the strengths of both paradigms.

\section{Main algorithm}\label{sec:main_algo}
\vspace{-0.5em} 

Alg.~\ref{main_algo} presents our overall algorithm. At its core, our algorithm aims to solve~\eqref{eq:scp_ilp}, i.e., find the minimum outlier set, but by incrementally generating the hyperedges $A \subseteq E$. Other main characteristics of the algorithm are:
\begin{itemize}[leftmargin=1em,itemsep=2pt,parsep=0pt,topsep=2pt]
\item At each iteration, the QUBO~\eqref{eq:scp_qubo} based on the current hyperedges $A$ is solved using quantum annealing.
\item The penalty $\lambda$ for~\eqref{eq:scp_qubo} decays following a schedule defined by hyperparameters $\dclim$, $\dcrate$ and $\dcstep$ (Step~\ref{algo:penalty_decay}).
\item Hyperedges are sampled from a candidate vertex set $\cV^\prime$, which is updated based on the current results (Sec.~\ref{sec:heuristic_basis}).
\end{itemize}
The algorithm terminates with the best estimate $\bz_\text{best}$ of the minimum outlier set and the sampled hyperedges $A$.

In the following, we show how the outputs of Alg.~\ref{main_algo} can be used to derive an error bound for consensus maximisation, and the rationale of our hyperedge sampling technique.


\begin{algorithm}[h]\centering
	\begin{algorithmic}[1]
	\REQUIRE Data $\cD = \{ \bp_i\}_{i=1}^N$, inlier threshold $\epsilon$, maximum iterations $M$, penalty $\pnt$ with decay parameters $\dclim$, $\dcrate$, $\dcstep$.
	\vspace{-1em}
	\STATE Initialise hyperedge set $A \leftarrow \emptyset$, candidate vertices $\cV^\prime \leftarrow \cV$, best outlier set $\bz_\text{best} \leftarrow \bone_{N}$.
	\FOR{$m$ = 1 to $M$}
	    \STATE  $\ba_{(m)} \leftarrow$ Active set of $\cV^\prime$ (see Sec.~\ref{sec:heuristic_basis}). \label{algo:select_basis}
	    \STATE $A \leftarrow A \cup \{ \cC_{\ba_{(m)}} \}$.
	    \IF{$m$  mod $\dcstep$ = 0}
	    \STATE $\pnt \leftarrow \text{max}(\pnt.\dcrate, \dclim)$.\label{algo:penalty_decay}
	    \ENDIF
	    \STATE $\bv  = [\begin{matrix}\bz & \bt\end{matrix}] \leftarrow$ Solve~\eqref{eq:scp_qubo} using quantum annealing \label{algo:solve_qubo}
	    \IF{$f(\cV\setminus \cC_\bz) = 0$}
            \STATE $\cI \leftarrow \cV\setminus \cC_\bz$ (found a consensus set). \label{algo:foundconsensus}
	        \IF {$\|\bz\|_1 < \|\bz_\text{best}\|_1$}
	            \STATE $\bz_\text{best} \leftarrow \bz$.\label{algo:store_best}
	        \ENDIF
	        \STATE $\cV^\prime \leftarrow \cC_{\bz} \cup \{$random subset of $\cI\}$ (see Sec.~\ref{sec:heuristic_basis}). \label{algo:Xprime_eq_O_with_I0}
	    \ELSE
	        \STATE $\cV^\prime \leftarrow \cV\setminus \cC_\bz$ (see Sec.~\ref{sec:heuristic_basis}). \label{algo:Dprime_eq_I}
	    \ENDIF
	\ENDFOR
	\RETURN Outlier set estimate $\bz_\text{best}$ and hyperedge set $A$.
	\end{algorithmic}
	\caption{Hybrid Quantum-Classical Robust Fitting. Note: Only Step~\ref{algo:solve_qubo} invokes the quantum annealer.}
	\label{main_algo}
\end{algorithm}

\subsection{Error bound}
\label{sec:error_bound}
\vspace{-0.5em} 

Consider the relaxation of~\eqref{eq:scp_ilp_matrix}
\begin{align}\label{eq:scp_lp_matrix}
    \begin{aligned}
        LP(A) =  \min_{\bz \in [0,1]^N} \quad &  \| \bz \|_1, \quad &
        \textrm{s.t.} \quad & \bA^T\bz \geq \bone_M
    \end{aligned}
\end{align}
which is a linear program. We must have that
\begin{align}
    LP(A) \le I(A) \le I(E).
\end{align}
Due to the factors in Sec.~\ref{sec:practical}, the solution $\bz$ by the quantum annealer on~\eqref{eq:scp_qubo} can be suboptimal. Given the best solution $\bz_\text{best}$, if the $\cI_{\text{best}} = \cV \setminus \cC_{\bz_\text{best}}$ is a consensus set, by Claim~\ref{mainclaim}, $\bz_\text{best}$ is a vertex cover of~\eqref{eq:scp_ilp}. We must have that
\begin{align}
    LP(A) \le I(E) \le \| \bz_\text{best} \|_1.
\end{align}
Using the fact that $|\cI^\ast| = N - I(E)$, we thus have
\begin{align}
    |\cI^\ast| - |\cI_{\text{best}}| \le \| \bz_\text{best} \|_1 - LP(A).
\end{align}
If the RHS is $0$, then $\cI_\text{best}$ is the globally optimal solution.



\subsection{Heuristic for sampling hyperedges}
\label{sec:heuristic_basis}
\vspace{-0.5em} 

Recall that a hyperedge is an infeasible basis. A simple way to generate hyperedges is to randomly sample $\delta$-subsets from $\cV$ until we find an infeasible subset, which will not be efficient. To improve efficiency, our sampling technique maintains a candidate vertex set $\cV^\prime \subseteq \cV$ where $f(\cV^\prime) = 1$, and takes the active set $\cS$ of $\cV^\prime$~\cite{eppstein,amenta}, where
\begin{align}
    g(\cS) = g(\cV^\prime),
\end{align}
as a hyperedge. Intuitively, the active set of $\cV^\prime$ is a basis with equal value with $\cV^\prime$. To generate diverse hyperedges, two strategies are employed to maintain $\cV^\prime$:
\begin{itemize}[leftmargin=1em,itemsep=2pt,parsep=0pt,topsep=2pt]
\item Take $\cV^\prime = \cV \setminus \cC_{\bz}$ if it is not a consensus set (Step~\ref{algo:Dprime_eq_I});
\item If a new consensus set $\cI = \cV \setminus \cC_\bz$ is found (Step~\ref{algo:foundconsensus}), set $\cV^\prime$ as the union of $\cC_\bz$ with a random subset of $\cI$ (Step~\ref{algo:Xprime_eq_O_with_I0}).
\end{itemize}

\subsection{Hyperparameter selection}
\vspace{-0.5em} 

The penalty value and decay schedule play important roles in Alg.~\ref{main_algo} to quickly find a consensus set and tighten the error bound, see Sec.~\ref{supp:sec:penalty_decay} in supplementary material for details. The precise values for the parameters used and/or investigated in our experiments will be provided in Sec.~\ref{sec:experiments}.

\section{Experiments}\label{sec:experiments}

\subsection{Synthetic data}\label{sec:synthetic}

We first examine the performance of D-Wave Advantage (version 1.1)~\cite{dwaveadvantage1.1} on our robust fitting formulation via synthetic data. We generated 2D points $\cD = \{(a_i,b_i)\}^{N}_{i=1}$ for 1D linear regression ($x \in \bbR^1$) with residual $r_i(x) = |a_i x - b_i|$, where $0 \le a_i, b_i \le 1$. For a randomly chosen ground truth $x$, a proportion of the points are corrupted with Gaussian noise of $\sigma_{in} = 0.1$ to form inliers, with the rest by Gaussian noise of $\sigma_{out} = 1.5$ to simulate outliers.

Due to the cost of accessing the QPU, our results in this subsection were not derived from many data repetitions. However, each QPU input instance was invoked with 10,000 anneals, which typically consumed $\approx$ 1.3 seconds. Also, see Sec.~\ref{supp:sec:embedding} in the supp.~material for details on embedding~\eqref{eq:scp_qubo} (e.g., number of qubits, runtime) onto the QPU.

\vspace{-1em}
\paragraph{Comparison between CPU and QPU.} We first compare CPU and QPU performance on our QUBO~\eqref{eq:scp_qubo} (i.e., independent of Alg.~\ref{main_algo}), with $A$ containing all hyperedges $E$. The CPU solver used was Gurobi~\cite{gurobi}, which solves~\eqref{eq:scp_qubo} exactly via exhaustive search, hence practical only for small instances. Fig.~\ref{fig:synthetic_cpu_qpu} plots the number of outliers $\|\bz\|_1$ (lower is better) optimised by the solvers as a function of
\begin{itemize}[leftmargin=1em,itemsep=2pt,parsep=0pt,topsep=2pt]
    \item Penalty $\pnt \in [0.1,100]$, with $N = 50$, outlier ratio $=0.2$.
    \item Outlier ratio $\in [0.1,0.6]$, with $N = 20$, $\pnt = 1.0$.
    \item $N \in [10,100]$, with $\pnt = 1.0$, outlier ratio $=0.2$.
\end{itemize}
As expected (see Sec.~\ref{sec:practical}), the gap in quality between the QPU solution and the ``ground truth'' provided by Gurobi increased with the examined parameters, indicating that the QPU is more reliable on ``easier'' instances of~\eqref{eq:scp_qubo}.


\begin{figure}[ht]\centering
	\centering
	
	    \subfloat[][Effect of penalty $\pnt$ ($N = 50$, outlier ratio $= 0.2$)]
		{
			\includegraphics[width=0.4\textwidth]{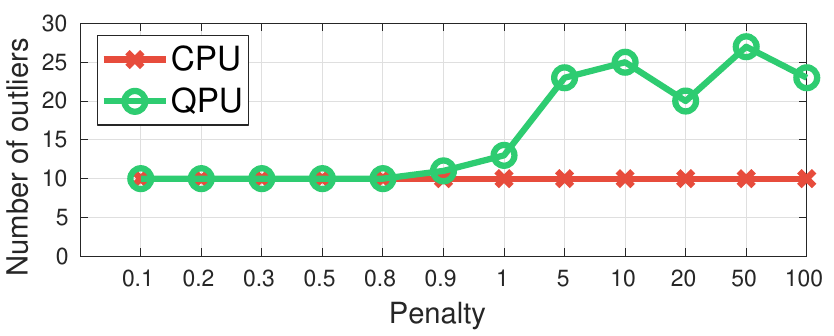}
			\label{fig:synthetic_cpu_qpu:penalty}
		}
		
		\subfloat[][Effect of outlier ratio ($N = 20$, $\pnt = 1.0$)]
		{
			\includegraphics[width=0.4\textwidth]{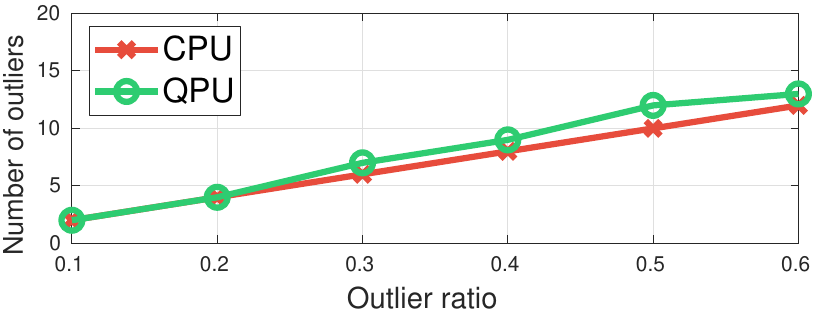}
			\label{fig:synthetic_cpu_qpu:outlier_ratio}
		}
		
		\subfloat[][Effect of data size $N$ ($\pnt = 1.0$, outlier ratio $=0.2$)]
		{
			\includegraphics[width=0.4\textwidth]{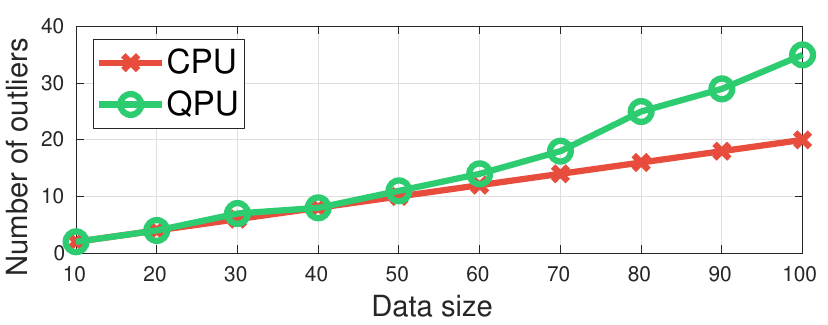}
			\label{fig:synthetic_cpu_qpu:data_size}
		}
    \vspace{-0.5em}	
	\caption{Comparisons between CPU and QPU on QUBO~\eqref{eq:scp_qubo}.}
	\label{fig:synthetic_cpu_qpu}
	
\end{figure}

\vspace{-1em}
\paragraph{Main algorithm}

Fig.~\ref{fig:synthetic_error_bound} illustrates running Alg.~\ref{main_algo} on synthetic 1D linear regression instances with $N=20$, $50$, and $100$ points, each with outlier ratio $0.2$. The QUBO subroutine~\eqref{eq:scp_qubo} in the main algorithm was solved using the QPU with $\pnt = 1.0$ (no $\lambda$ decay was done). The values $\| \bz \|_1$ and $LP(A)$ were plotted as a function of the size of $A$, i.e., number of hyperedges. The results mainly illustrate the feasibility of solving robust fitting using quantum annealing.


\begin{figure}[h]
	\centering
	\includegraphics[width=0.45\textwidth]{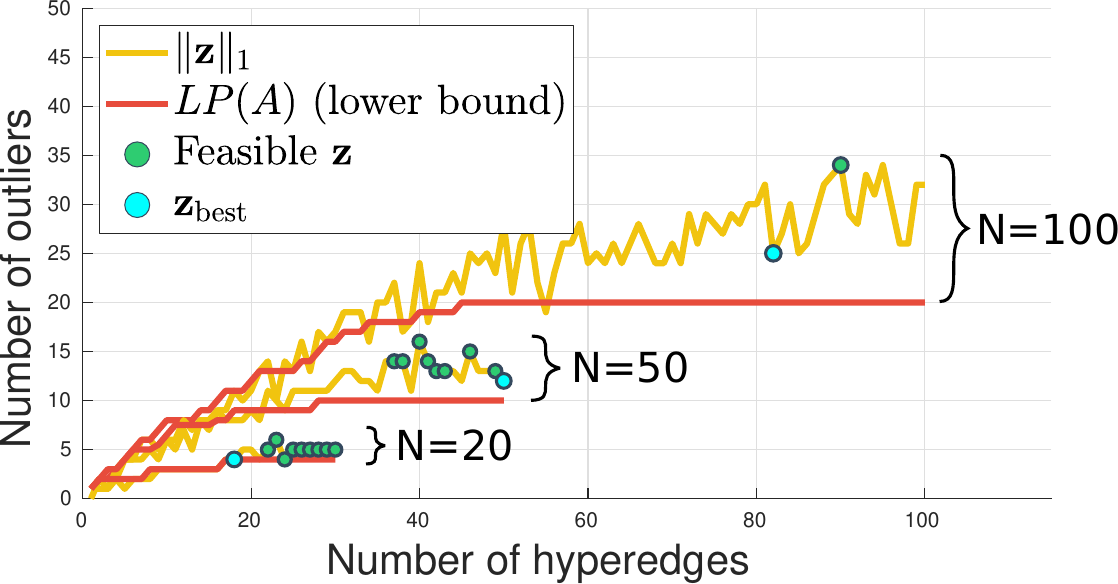}
	\caption{Number of outliers $\| \bz \|_1$ optimised by QPU and lower bound $LP(A)$, plotted across the iterations of Alg.~\ref{main_algo}.}
	\vspace{-0.5em}
	\label{fig:synthetic_error_bound}
\end{figure}

\vspace{-1em}
\paragraph{Comparing to simulated annealing} In the context of Alg.~\ref{main_algo}, we compared quantum annealing (QA) and simulated annealing (SA)~\cite{simanneal} (on CPU with 10,000 anneals) in solving the QUBO subroutine (Line~\ref{algo:solve_qubo} of Alg.~\ref{main_algo}). A synthetic 1D linear regression instance with $N=20$ and outlier ratio $0.2$ was generated. The penalty $\pnt$ was set to $0.5$ (no $\lambda$ decay was done). Fig.~\ref{fig:qa-sa} shows the runtime of QA and SA across the iterations of Alg.~\ref{main_algo} (for QA, the cost of embedding~\eqref{eq:scp_qubo} onto the QPU was excluded; again, see Sec.~\ref{supp:sec:embedding} in the supp.~material for details), and the Hamming distance between the $\bz$'s found by the methods in each iteration.

\begin{figure}[h]
    \centering
    \includegraphics[width=0.42\textwidth]{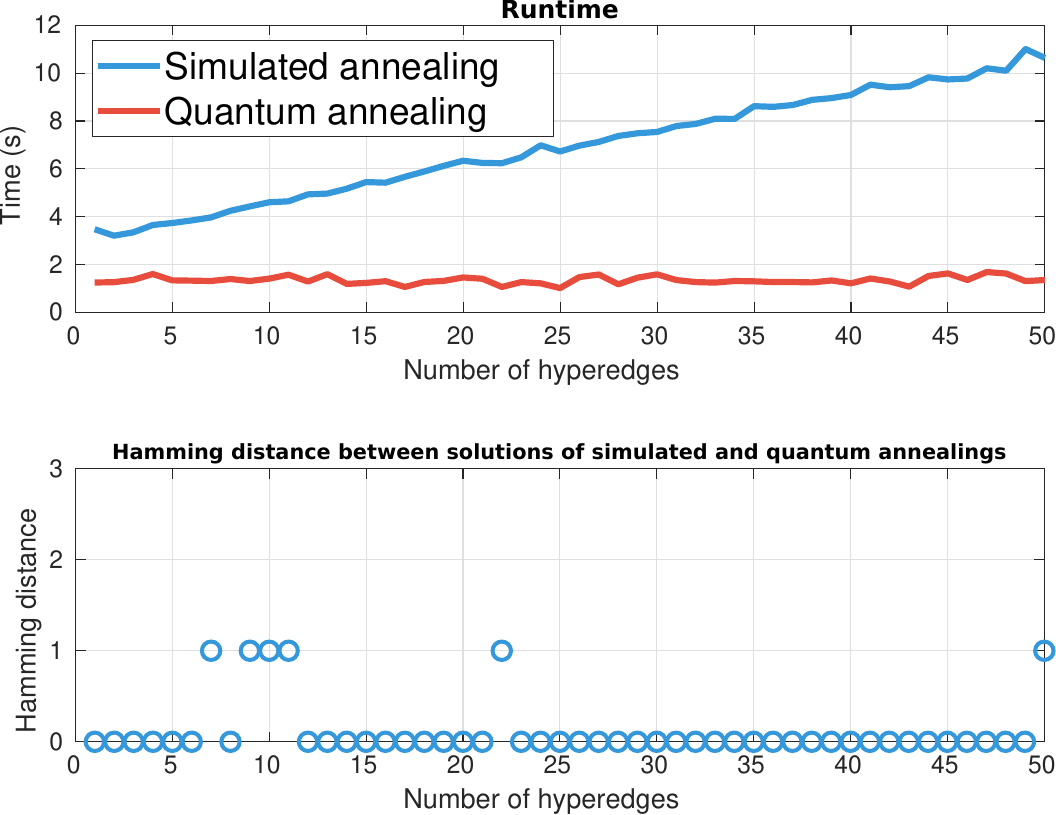}
    \vspace{-0.5em}	
    \caption{Comparing quantum annealing (on D-Wave Advantage) and simulated annealing (on classical computer).}
    \vspace{-0.5em}	
    \label{fig:qa-sa}
\end{figure}

The results illustrate that the runtime of SA (on CPU) grew steadily as the number of sampled hyperedges $A$ increased, whereas the runtime of QA remained largely constant across the iterations, which suggests that the underlying physical processes of QA were not affected significantly by problem size (as long as the problem ``fits'' on the QPU). Also, Fig.~\ref{fig:qa-sa} shows the solutions obtained by QA and SA are largely the same; this supports using SA in place of QA to examine the efficacy of Alg.~\ref{main_algo} on larger sized real data.


\subsection{Real data}\label{sec:realdata}

We tested our method on real data for fundamental matrix estimation and multi-view triangulation. We used SA (on CPU) in place of QA to allow Alg.~\ref{main_algo} to handle bigger problems. Two variants of Alg.~\ref{main_algo} were executed:
\begin{itemize}[leftmargin=1em,itemsep=2pt,parsep=0pt,topsep=2pt]
    \item Alg.~\ref{main_algo}-E, where the algorithm was terminated as soon as a consensus set was found (Line \ref{algo:foundconsensus}).
    \item Alg.~\ref{main_algo}-F, where the algorithm was run until the maximum iterations $M$ ($300$ for fund.~matrix, $200$ for triangulation).
\end{itemize}

We compared our method to i) random sampling methods: RANSAC (RS)~\cite{ransac}, LO-RANSAC (LRS)~\cite{loransac}, and Fixing LO-RANSAC (FLRS)~\cite{fixingloransac}, ii) deterministic algorithms: Exact penalty (EP)~\cite{huu2017ep}, and Iterative biconvex optimization (IBCO)~\cite{zhipeng2018ibco}, and iii) quantum robust fitting (QRF)~\cite{TJ_ACCV_2020}. Each method was run 100 times and average results were reported. All experiments were conducted on a system with 2.6~GHz processor and 16~GB of RAM.

\vspace{-0.5em}
\subsubsection{Fundamental matrix estimation.} \label{sec:experiment_fund}
\vspace{-0.5em} 

\begin{table*}[h]
    	\centering
		\includegraphics[width=1\textwidth]{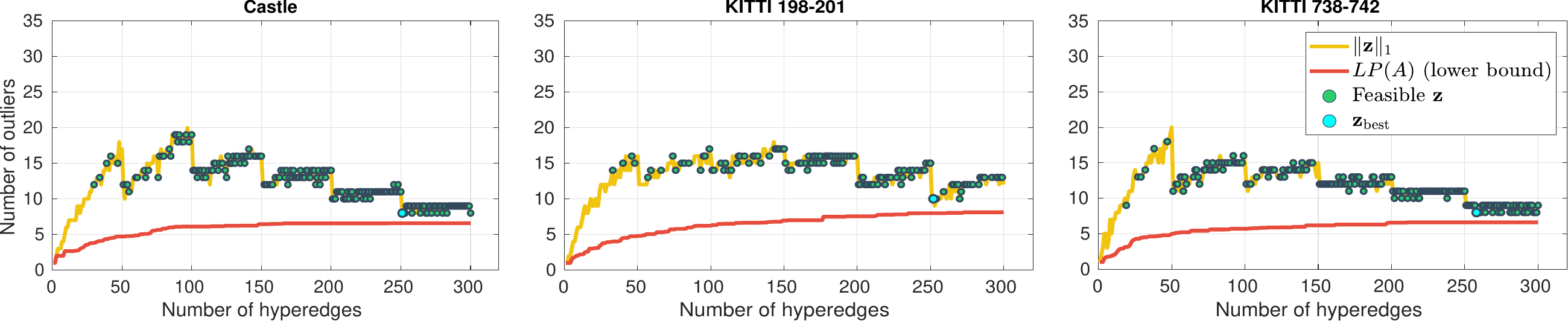}
		\vspace{-2em}
    	\captionof{figure}{Fundamental matrix estimation, where number of outliers $\| \bz \|_1$ and lower bound $LP(A)$, plotted across the iterations of Alg.~\ref{main_algo}-F.}
    	\label{fig:fund_bound}
    \vspace*{0.5em}  
    
    \footnotesize
    \begin{tabular}{|c|c|c|c|c|c|c|c|c|c|}
        \hline
         \multicolumn{2}{|c|}{Method} & RS~\cite{ransac} & LRS~\cite{loransac} & FLRS~\cite{fixingloransac} & EP~\cite{huu2017ep} & IBCO~\cite{zhipeng2018ibco} & QRF~\cite{TJ_ACCV_2020} & Alg.~\ref{main_algo}-E & Alg.~\ref{main_algo}-F \\
         \hline \hline
         Castle &   $|\cI|$ (Error bound) & 74 ($-$) & 74 ($-$) & 74 ($-$) & 70 ($-$) & 76 ($-$) & 73 ($-$) & 72 (8.17) & 76 (1.41)\\
         $N = 84$ & Time (s) & 0.20 & 0.11 & 0.20 & 0.25 & 0.34 & 199.48 & 18.07 & 1998.87 \\
         \hline
         
         Valbonne & $|\cI|$ (Error bound) & 34 ($-$) & 36 ($-$) & 36 ($-$) & 33 ($-$) & 38 ($-$) & 29 ($-$) & 36 (6.00) & 36 (4.00)\\
         $N = 45$ & Time (s) & 0.21 & 0.20 & 0.31 & 0.34 & 0.44 & 110.30 & 6.71 & 1915.82 \\
         \hline
         
         Zoom & $|\cI|$ (Error bound) & 90 ($-$) & 91 ($-$) & 91 ($-$) & 92 ($-$) & 95 ($-$) & 89 ($-$) & 93 (9.91) & 94 (3.64) \\
         $N = 108$ & Time (s) & 0.31 & 0.29 & 0.14 & 0.21 & 0.35 & 257.03 & 92.35 & 2109.13\\
         \hline
         
         KITTI 104-108 & $|\cI|$ (Error bound) & 309 ($-$) & 313 ($-$) & 312 ($-$) & 318 ($-$) & 321 ($-$) & 256 ($-$) & 320 (9.91) & 324 (2.30) \\
         $N = 337$ & Time (s) & 0.04 & 0.04 & 0.07 & 0.28 & 0.39 & 799.33 & 137.26 & 2408.04 \\
         \hline
         
         KITTI 198-201 & $|\cI|$ (Error bound) & 306 ($-$) & 308 ($-$) & 307 ($-$) & 308 ($-$) & 312 ($-$) & 309 ($-$) & 308 (10.00) & 312 (1.89)\\
         $N=322$ & Time (s) & 0.05 & 0.13 & 0.07 & 0.23 & 0.42 & 774.06 & 36.15 & 2350.39 \\
         \hline 
         
         KITTI 738-742 & $|\cI|$ (Error bound) & 481 ($-$) & 483 ($-$) & 483 ($-$) & 491 ($-$) & 492 ($-$) & 447 ($-$) & 492 (5.88) & 493 (1.39)\\
         $N =501$ & Time (s) & 0.05 & 0.18 & 0.23 & 0.53 & 0.61 & 1160.12 & 22.46 & 2506.04\\
         \hline
    \end{tabular}
    \vspace{-1em}
    \captionof{table}{Fundamental matrix estimation results. Alg.~\ref{main_algo} employed simulated annealing (quantum annealing could be faster by $10$ times). Only Alg.~\ref{main_algo}  amongst all methods compared here returned error bounds.}
    \label{tab:fund_compare}
\end{table*}

We evaluated our method on linearised fundamental matrix fitting~\cite[Chapter~4]{mvgbook}, where $\bx \in \bbR^8$. We used inlier threshold $\epsilon = 0.03$ for the algebraic residual (convex in $\bx$, hence also quasiconvex), and penalty parameters $\pnt = 1.0$, $\dcrate = 0.5$, $\dcstep = 50$, and $\dclim = 0.01$.

We used three image pairs from VGG~\cite{vgg} (Castle, Valbonne, and Zoom) and three image pairs from sequence \texttt{00} of KITTI odometry~\cite{kitti}~\footnote{CC BY-NC-SA 3.0 License~\cite{kittilicense}.} (frame indices 104-108, 198-201, and 738-742). In each pair, SIFT features~\cite{sift} were extracted and matched using VLFeat~\cite{vlfeat}; Lowe's second nearest neighbour test was also applied to prune matches.

Fig.~\ref{fig:fund_bound} shows the intermediate outputs of Alg.~\ref{main_algo}-F on Castle, KITTI 198-201 and KITTI 738-742, particularly the lower bound of the solution. See Sec.~\ref{supp:sec:real_data} in the supplementary material for the plots for the other image pairs.

Table~\ref{tab:fund_compare} compares our method with the others. Overall, the quality of our method was comparable to the others, with Alg.~\ref{main_algo}-F providing higher quality and tighter bound than Alg.~\ref{main_algo}-E. Note that only our method returned error bounds (Sec.~\ref{sec:error_bound}), which allowed to deduce that Alg.~\ref{main_algo}-F found consensus sets that were close to the optimum. As expected, the fastest methods were the random sampling approaches. Our method was much slower than the others, mainly due to the usage of SA. However, our experiments in Sec.~\ref{sec:synthetic} shows that QA can improve the speed of SA up to a factor of 10 without affecting solution quality (see Fig.~\ref{fig:qa-sa}). Hence, we expect Alg.~\ref{main_algo} to be more competitive as quantum annealer capacity improves. Fig.~\ref{fig:fund_qualitative} qualitatively illustrates the results of Alg.~\ref{main_algo}-E.

\begin{figure}[h]
    \centering

        
        \subfloat[][Zoom]
        {
            \includegraphics[width=0.39\textwidth]{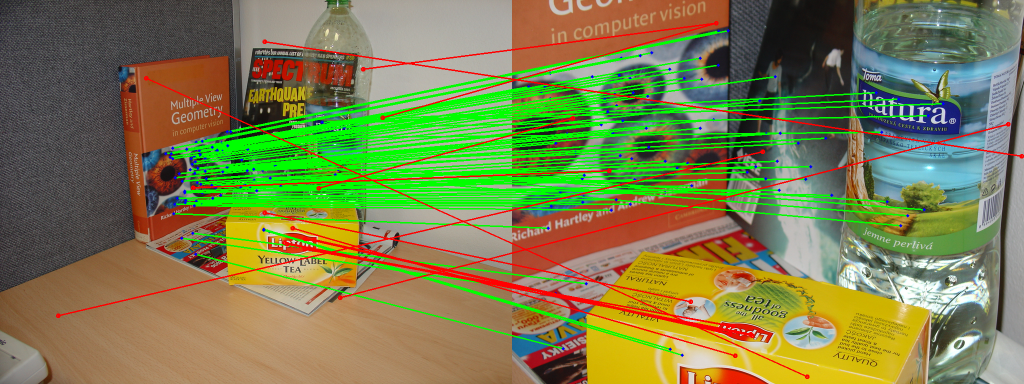}
  
        }
        
        \subfloat[][KITTI 104-108]
        {
            \includegraphics[width=0.39\textwidth]{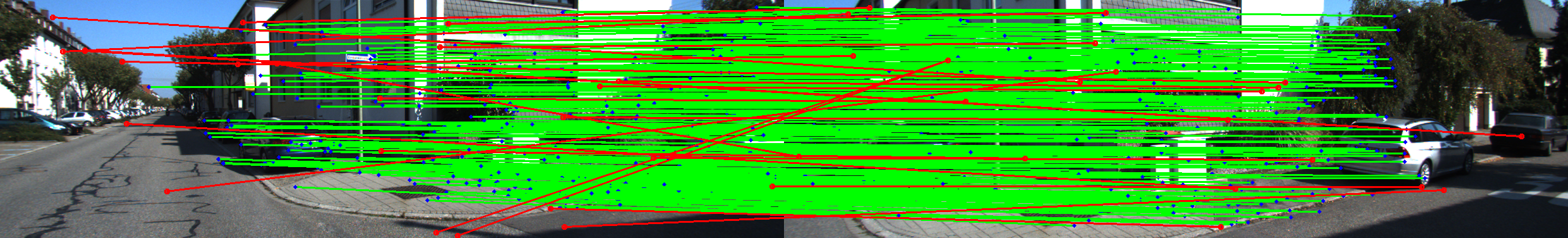}
 
        }
     \vspace{-0.5em}	
     \caption{Qualitative results of Alg.~\ref{main_algo}-E on fundamental matrix estimation. Green and red lines represent inliers and outliers found.}
    \label{fig:fund_qualitative}
\end{figure}

\vspace{-2em}
\subsubsection{Multi-view triangulation} \label{sec:experiment_tri}
\vspace{-0.5em} 

\begin{table*}[h]
    
    	\centering
    	\includegraphics[width=1\textwidth]{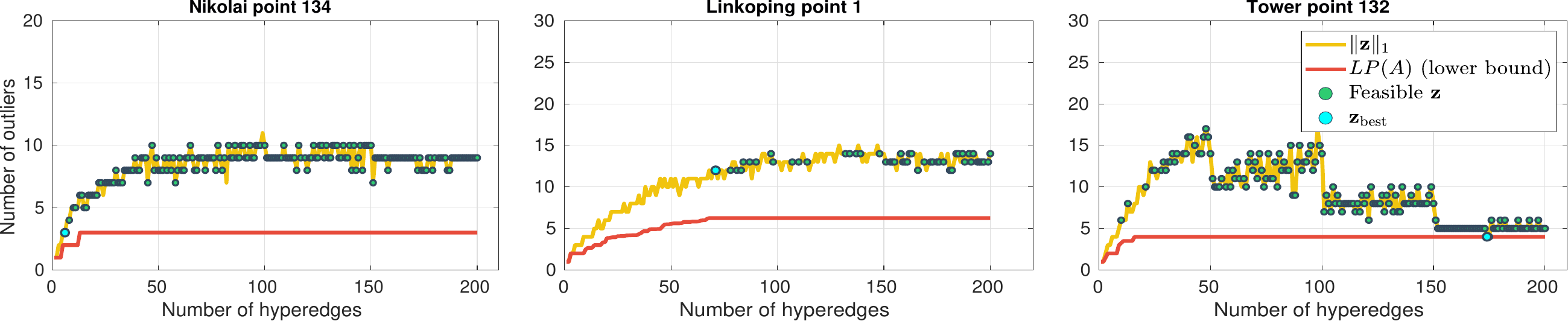}
    	\vspace{-2em}
    	\captionof{figure}{Multi-view triangulation, where number of outliers $\| \bz \|_1$ and lower bound $LP(A)$ plotted across the iterations of Alg.~\ref{main_algo}-F.}
    	\label{fig:tri_bound}
    
     \vspace*{0.5em}  
    
    \centering
    \footnotesize
    \begin{tabular}{|c|c|c|c|c|c|c|c|c|c|}
        \hline
         \multicolumn{2}{|c|}{Method} & RS~\cite{ransac} & LRS~\cite{loransac} & FLRS~\cite{fixingloransac} & EP~\cite{huu2017ep} & IBCO~\cite{zhipeng2018ibco} & QRF~\cite{TJ_ACCV_2020} & Alg.~\ref{main_algo}-E & Alg.~\ref{main_algo}-F \\
         \hline \hline
         Nikolai point 134 & $|\cI|$ (Error bound) & 21 ($-$) & 21 ($-$) & 21 ($-$) & 21 ($-$) & 21 ($-$) & 21 ($-$) & 21 (1.00) & 21 (0.00)\\
         $N = 24$ & Time (s) & 0.24 & 0.32 & 0.30 & 0.34 & 0.36 & 158.39 & 6.12 & 159.28 \\
         \hline
         
         Nikolai point 534 & $|\cI|$ (Error bound) & 16 ($-$) & 16 ($-$) & 16 ($-$) & 15 ($-$) & 17 ($-$) & 17 ($-$) & 16 (2.00) & 16 (1.00) \\
         $N =20$ & Time (s) &  0.27 & 0.35 & 0.25 & 0.29 & 0.32 & 154.63 & 8.71 & 147.14 \\
         \hline
         
         Linkoping point 1 & $|\cI|$ (Error bound) & 15 ($-$) & 15 ($-$) & 15 ($-$) & 14 ($-$) & 16 ($-$) & 14 ($-$) & 13 (5.75) & 13 (5.75)\\
         $N = 25$ & Time (s) & 0.25 & 0.30 & 0.34 & 0.38 & 0.47 & 175.83 & 20.05 & 153.79\\
         \hline
         
         Linkoping point 14 & $|\cI|$ (Error bound) & 36 ($-$) & 36 ($-$) & 36 ($-$) & 35 ($-$) & 37 ($-$) & 32 ($-$) & 37 (4.67) & 37 (4.27) \\
         $N = 52$ & Time (s) & 0.27 & 0.44 & 0.38 & 0.53 & 0.64 & 360.37 & 130.10 & 194.46\\
         \hline
         
         Tower point 3 & $|\cI|$ (Error bound) & 73 ($-$) & 73 ($-$) & 73 ($-$) & 73 ($-$) & 73 ($-$) & 73 ($-$) & 72 (3.00) & 72 (1.00) \\
         $N = 79$ & Time (s) & 0.28 & 0.64 & 0.32 & 0.36 & 0.43 & 555.27 & 27.26 & 177.43 \\
         \hline
         
         Tower point 132 & $|\cI|$ (Error bound) & 79 ($-$) & 79 ($-$) & 79 ($-$) & 79 ($-$) & 81 ($-$) & 81  ($-$) & 79 (2.75) & 81 (0.00) \\
         $N = 85$ & Time (s) & 0.30 & 0.62 & 0.42 & 0.51 & 0.51 & 563.43 & 26.33 & 163.32\\
         \hline
    \end{tabular}
    \vspace{-0.5em}
    \captionof{table}{Multi-view triangulation results.  Alg.~\ref{main_algo} employed simulated annealing (quantum annealing could be faster by $10$ times). Only Alg.~\ref{main_algo}  amongst all methods compared here returned error bounds.}
    \vspace{-1.5em}
    \label{tab:tri_compare}
\end{table*}

Points 134 \& 534 from Nikolai, points 1 \& 14 from Linkoping, and points 3 \& 132 from Tower~\cite{enqvist2011non} were used. In this task, 3D coordinates of those 3D points ($\bx \in \bbR^3$) were estimated using reprojection error (which is quasiconvex~\cite{kahl08}) under outliers. The inlier threshold and penalty were respectively $\epsilon = 1$ pixels and $\pnt=5$. The decay parameters were $\dcrate = 0.5$, $\dcstep = 50$, and $\dclim = 0.03$.

Fig.~\ref{fig:tri_bound} shows the intermediate outputs of Alg.~\ref{main_algo}-F on Nikolai point 134, Linkoping point 1 and Tower point 132, particularly the lower bound of the solution. See Sec.~\ref{supp:sec:real_data} in the supplementary material for the plots for the other points. Interestingly, the results show that it was more difficult to find a tight lower bound here (especially Nikolai point 134 and Linkoping point 1). This could be due to numerical inaccuracies in solving the minimax problem~\eqref{eq:minimax} for quasiconvex residuals~\cite{eppstein,amenta}, which affected the efficacy of hyperedge sampling. Table~\ref{tab:tri_compare} shows the quantitative results; a similar conclusions as that of Table~\ref{tab:fund_compare} can be drawn. In particular, note that only our method was able to provide error bounds; in the case of Tower point 132, the global solution was provably found by the algorithm (gap is zero).

\vspace{-0.5em}
\section{Weaknesses and conclusions}
\vspace{-0.5em} 

There are two main shortcomings: First, Alg.~\ref{main_algo} was validated on an actual quantum computer only for small scale synthetic data (for reasons covered in Sec.~\ref{sec:practical}). To fully realise the potential of the algorithm, testing with real data on a quantum computer is needed. Second, our results reveal that the hyperedge sampling procedure is also crucial to Alg.~\ref{main_algo}. Developing a more effective way of sampling hyperedges is an interesting research direction. 

\vspace{-0.5em}
\paragraph{Conclusions} Our work illustrates the potential of quantum annealing for robust fitting. It outperforms (in simulation) the only other quantum approach in robust fitting~\cite{TJ_ACCV_2020} as well as offers an error bound to mitigate the weakness of current QPU. We hope that our work helps trigger further development on applying quantum computers in robust fitting and computer vision applications.

\vspace{-0.5em}
\section*{Acknowledgement}
This work was supported by Australian Research Council ARC DP200101675, and~D.~Suter acknowledges funding under Australian Research Council grant~DP200103448.

\section*{Supplementary Material}
\begin{alphasection}
\section{Proof of claim~\ref{claim:true_outlier_in_basis}}
\label{supp:sec:proof_true_outlier_in_basis}

By construction, $f(\cI^\ast) = 0$. Given an infeasible basis $\cB$, by monotonicity
\begin{align}
    1 = f(\cB) \le f(\cI^\ast \cup \cB) = 1.
\end{align}
Thus, $\cB$ must contain a point not in $\cI^\ast$.

\section{Proof of claim~\ref{mainclaim}}
\label{supp:sec:proof_main_claim}

Recall that an independent set of a hypergraph is a subset of the vertices where none of the members of the subset form a hyperedge~\cite[Chapter~2]{Bretto2013}.

Given $f(\cI) = 0$, we have that $f(\cA) = 0$ for all $\cA \subseteq \cI$ due to monotonicity. Hence, the vertices in $\cI$ do not form hyperedges, i.e., $\cI$ is an independent set.

Let $\cI$ be an independent set of $H$, and $\Omega = \{\cS_1,\cS_2,\dots\}$ be all $\delta$-subsets of $\cC$. By construction,
\begin{align}
    g(\cS_\ell) \le \epsilon \;\;\; \forall \cS_\ell \in \Omega
\end{align}
since there are no hyperedges in $\cI$. Let $\cS_\ast \in \Omega$ such that
\begin{align}\label{eq:state0}
    g(\cS_\ell) \le g(\cS_\ast) \le \epsilon \;\;\; \forall \cS_\ell \in \Omega.
\end{align}
Suppose $g(\cI) > \epsilon$: from~\cite{eppstein,amenta}, there is a support set $\bar{\cS}$---which is also a basis and hence a $\delta$-subset---of $\cI$ such that
\begin{align}
g(\cI) = g(\bar{\cS}),
\end{align}
which implies that $\bar{\cS} \in \Omega$ but $\bar{\cS} \ne \cS_{\ast}$, and
\begin{align}
    g(\bar{\cS}) > \epsilon \ge g(\cS_\ast),
\end{align}
which contradicts~\eqref{eq:state0}. Thus, we must have that $g(\cI) \le \epsilon$, i.e., $\cI$ is a consensus set.

\section{Reformulating QUBO for D-Wave solvers}
\label{supp:sec:reformulate_qubo_to_dwave}

Recall the QUBO~\eqref{eq:scp_qubo}

\begin{equation*}
    \begin{aligned}
        Q_\lambda(A) = \min_{\bv \in \{0,1\}^{N+\delta^\prime M}}  & \left[ \begin{matrix}\bv^T & 1 \end{matrix}\right] (\bJ + \pnt\bH_A^T\bH_A) \left[ \begin{matrix}\bv^T & 1 \end{matrix}\right]^T
    \end{aligned}
\end{equation*}
Let $\bQ = \bJ + \pnt\bH_A^T\bH_A$, and denote $q_{ij}$ is the element in $i^\text{th}$-row and $j^\text{th}$-column of $\bQ$

D-Wave solvers accept $\bQ$ as a upper-triangular matrix, which can be obtained by following procedure
\begin{itemize}
    \item For every $i$ and $j$, if $j > i$, then $q_{ij} = q_{ij} + q_{ji}$
    \item For every $i$ and $j$, if $j < i$, then $q_{ij} = 0$
\end{itemize}
We attain the new QUBO with upper-triangular $\bQ$
\begin{equation*}
    \begin{aligned}
        Q_\lambda(A) = \min_{\bv \in \{0,1\}^{N+\delta^\prime M}}  & \left[ \begin{matrix}\bv^T & 1 \end{matrix}\right] \bQ \left[ \begin{matrix}\bv^T & 1 \end{matrix}\right]^T
    \end{aligned}
\end{equation*}
which, however, still cannot be applied to D-Wave solvers since the variables are in the form $\left[ \begin{matrix}\bv^T & 1 \end{matrix}\right]$. However, this formulation can be rewritten using a simple derivation.  

Suppose 
\begin{align}
\bv &= \left[ \begin{matrix} v_1 & v_2 & v_3\end{matrix} \right] \\
\bQ & = \left[ \begin{matrix}q_{11} & q_{12} & q_{13} & q_{14} \\ 0 & q_{22} & q_{23} & q_{24} \\ 0 & 0 & q_{33} & q_{34} \\ 0 & 0 & 0 & q_{44} \end{matrix} \right].
\end{align}

We then take the last column of $\bQ$ except the last element $q_{44}$, which yields $\bq = \left[\begin{matrix} q_{14} & q_{24} & q_{34} \end{matrix} \right]^T$. Next, we add $\bq$ to the diagonal of $\bQ$
\begin{align}
    \bQ^\prime = \left[ \begin{matrix} q_{11} + q_{14} & q_{12} & q_{13} \\ 0 & q_{22} + q_{24} & q_{23} \\ 0 & 0 & q_{33} + q_{34} \end{matrix} \right].
\end{align}
Since $\bv$ is a binary vector, i.e., $v_i^2 = v_i$, we can obtain

\begin{align}
    \left[ \begin{matrix}\bv^T & 1 \end{matrix}\right] \bQ \left[ \begin{matrix}\bv^T & 1 \end{matrix}\right]^T = \bv^T \bQ^\prime \bv + q_{44}.
\end{align}
Therefore, we get the new QUBO
\begin{equation}
    \begin{aligned}
        Q_\lambda(A) = \min_{\bv \in \{0,1\}^{N+\delta^\prime M}}  & \bv^T \bQ^\prime \bv + \text{constant},
    \end{aligned}
    \label{supp:eq:final_qubo}
\end{equation}
which can be directly applied to D-Wave solvers.



\section{Spectral gap} \label{supp:sec:spectral_gap}

\paragraph{Computation of spectral gap.} Recall matrix $\bQ^\prime$ of QUBO~\eqref{supp:eq:final_qubo}. For simplicity, let 
\begin{align}
    n = N + \delta^\prime M
\end{align}
thus $\bQ^\prime$ is an \textit{upper-triangular matrix} of the size $n \times n$, and denote $q^\prime_{i,j}$ is the element in $i^\text{th}$-row and $j^\text{th}$-column of $\bQ^\prime$.

The QUBO problem~\eqref{supp:eq:final_qubo} is firstly converted to Ising problem~\cite{djidjev2018efficient}
\begin{align}
    h_i = & \frac{q^\prime_{ii}}{2} + \sum_{j=1}^n \frac{q^\prime_{ij}}{4}, \\
    J_{ij} = & \frac{q^\prime_{ij}}{4}
\end{align}
for all $i \in \{1, \dots, n \}$ and all $i < j$. In QPU, $h_i$ are termed biases, and $J_{ij}$ are called couplings. Then, biases and couplings are normalised such that $h_i \in [-2,2]$ and $J_{ij} \in [-1,1]$, since D-Wave limits the value range of biases in $[-2,2]$ and couplings in $[-1,1]$~\cite{dwavesolverproperties}. Next, the initial and final Hamiltonians are computed
\begin{align}
    H_\text{init} = & \sum_i \hat{\sigma}_x^{(i)}, \\
    H_\text{final} = & \sum_i h_i \hat{\sigma}_z^{(i)} + \sum_{i<j} J_{ij} \hat{\sigma}_z^{(i)}\hat{\sigma}_z^{(j)} 
\end{align}
where,
\begin{align*}
    \hat{\sigma}_x^{(i)} = &\overbrace{\bI \otimes \bI \otimes \dots \otimes \bI \otimes \tikzmark{a}\sigma_x \otimes \bI \otimes \dots \otimes \bI}^N, \\
    \\
    \hat{\sigma}_z^{(i)} = &\overbrace{\bI \otimes \bI \otimes \dots \otimes \bI \otimes \tikzmark{b}\sigma_z \otimes \bI \otimes \dots \otimes \bI}^N, \\
    \\
    \hat{\sigma}_z^{(i)}\hat{\sigma}_z^{(j)} = & \overbrace{\bI \otimes \dots \otimes \tikzmark{c}\sigma_z \otimes \bI \otimes \dots \otimes \bI \otimes \tikzmark{d}\sigma_z \otimes \dots \otimes \bI}^N, \\
    \\
    \bI = & \left[\begin{matrix}1 & 0 \\ 0 & 1 \end{matrix} \right], \quad \sigma_x = \left[\begin{matrix} 0 & 1 \\ 1 & 0 \end{matrix} \right], \quad \sigma_z = \left[\begin{matrix} 1 & 0 \\ 0 & -1 \end{matrix} \right],
\end{align*}
the Hamiltonian of the quantum computer is represented as
\begin{align}
    H(s) = (1-s)H_\text{init} + sH_\text{final},
\end{align}
where $s \in [0,1]$ is the normalised annealing time. 
\begin{tikzpicture}[remember picture,overlay]
\draw[<-] 
      ([shift={(2pt,-2pt)}]a) |- ([shift={(-10pt,-10pt)}]a) 
      node[anchor=east] {$\scriptstyle i^\text{th}\text{ position}$}; 
\draw[<-] 
      ([shift={(2pt,-2pt)}]b) |- ([shift={(-10pt,-10pt)}]b) 
      node[anchor=east] {$\scriptstyle i^\text{th}\text{ position}$}; 
\draw[<-] 
      ([shift={(2pt,-2pt)}]c) |- ([shift={(-10pt,-10pt)}]c) 
      node[anchor=east] {$\scriptstyle i^\text{th}\text{ position}$}; 
\draw[<-] 
      ([shift={(2pt,-2pt)}]d) |- ([shift={(-10pt,-10pt)}]d) 
      node[anchor=east] {$\scriptstyle j^\text{th}\text{ position}$};
\end{tikzpicture}

For a particular $s$, $H(s)$ is a $2^N\times 2^N$ matrix, which is then decomposed to obtain the smallest and second smallest eigenvalues. The eigenspectra of smallest and second smallest eigenvalues respectively represent the ground state and the first excited (high) energy state. The minimum gap between those two eigenspectra represents the spectral gap.

\paragraph{Results.} The synthetic data with the setting same as that of Sec.~\ref{sec:synthetic} is generated, where $N = 5$, outlier ratio $= 0.4$, $\pnt \in [0.1, 100]$, and $A$ containing all hyperedges $E$. 

\begin{figure}
    	\centering
		\includegraphics[width=0.4\textwidth]{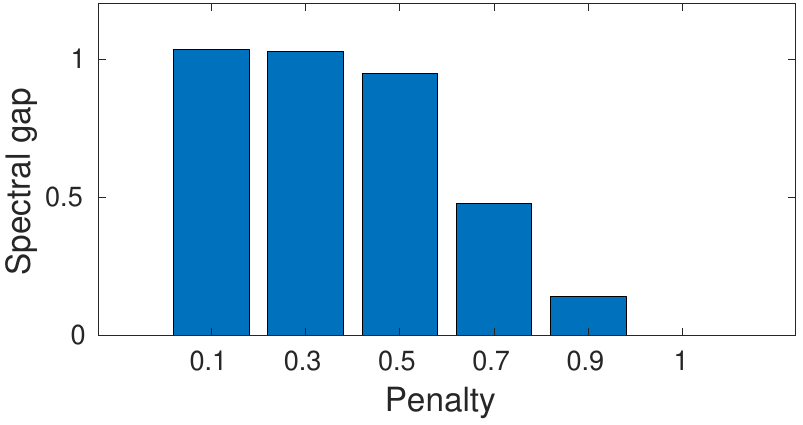}
    	\captionof{figure}{Spectral gap.}
    	\label{supp:fig:spectral_gap}
\end{figure}

As the penalty $\pnt$ increases, the spectral gap quickly reduces (Fig.~\ref{supp:fig:spectral_gap}). This implies that the probability of the quantum system remains in the ground state during the annealing time decreases with larger $\pnt$.

The eigenvalues of specific penalty $\pnt$ is also shown in Fig.~\ref{supp:fig:eigens}, which indicates the decrease of spectral gap with larger penalty $\pnt$.

\begin{figure}
        \centering
        
            \subfloat[][]
            {
                \includegraphics[width=0.45\textwidth]{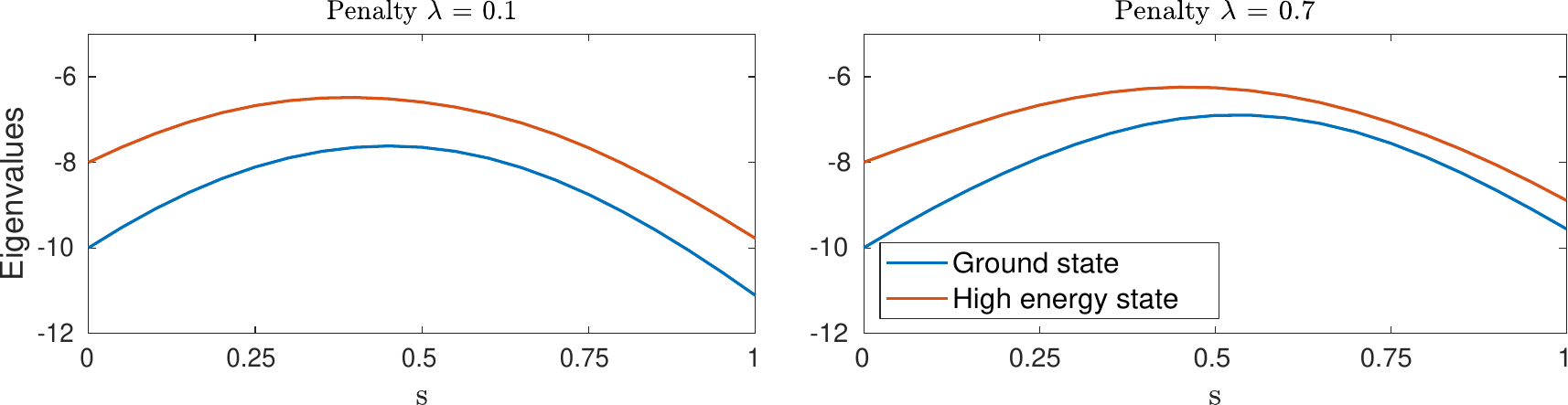}.
                \label{supp:fig:small_eigens}
            }
            
            \subfloat[][]
            {
                \includegraphics[width=0.45\textwidth]{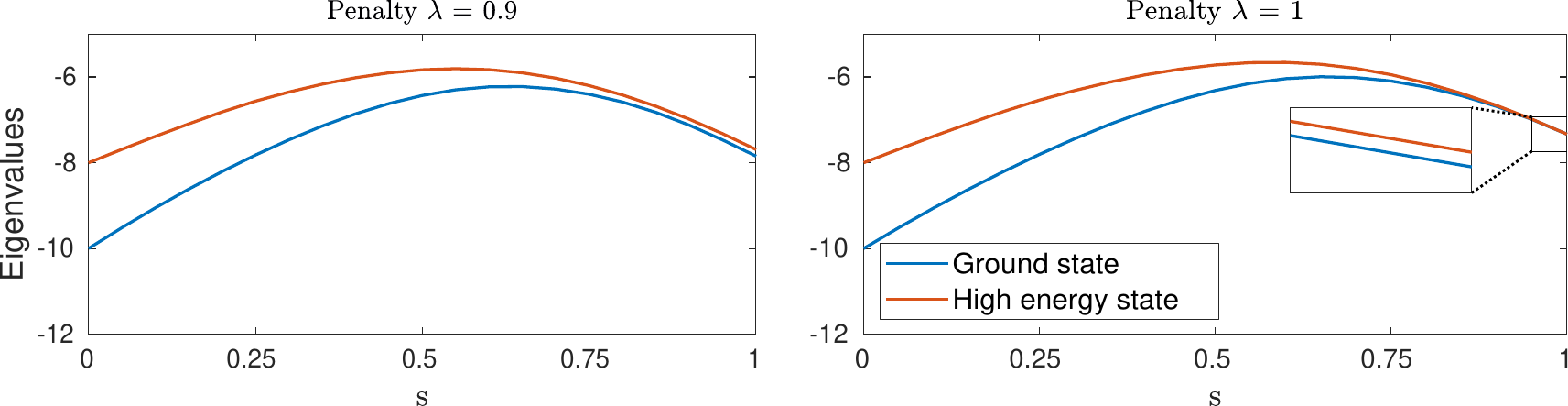}.
                \label{supp:fig:big_eigens}
            }
        \caption{Eigenvalues of ground state and first excited (high) energy state for every penalty value $\lambda$, where the minimum gap between two eigenspectrums is the spectral gap.}
        \label{supp:fig:eigens}
        \vspace{-1em}
\end{figure}

\section{Benefit of penalty decay} \label{supp:sec:penalty_decay}
Fig.~\ref{supp:fig:pnt_decay} shows the comparison between fixing $\pnt$ and decaying $\pnt$ in fundamental matrix estimation. The decay parameters are set same as those in Sec.~\ref{sec:experiment_fund}. If the penalty $\pnt$ is large ($\pnt = 1$), Alg.~\ref{main_algo} can quickly find a consensus set but the error bounds cannot be tightened. By contrast, if the penalty $\pnt$ is small ($\pnt = 0.02$), Alg.~\ref{main_algo} will require more iterations to find a consensus set. Therefore, decaying penalty $\pnt$ is a reasonable strategy that helps Alg.~\ref{main_algo} quickly find a consensus set and efficiently tighten the error bound.
\begin{figure*}[h]
        \centering
       
        \subfloat[][Castle]
            {
                \includegraphics[width=0.9\textwidth]{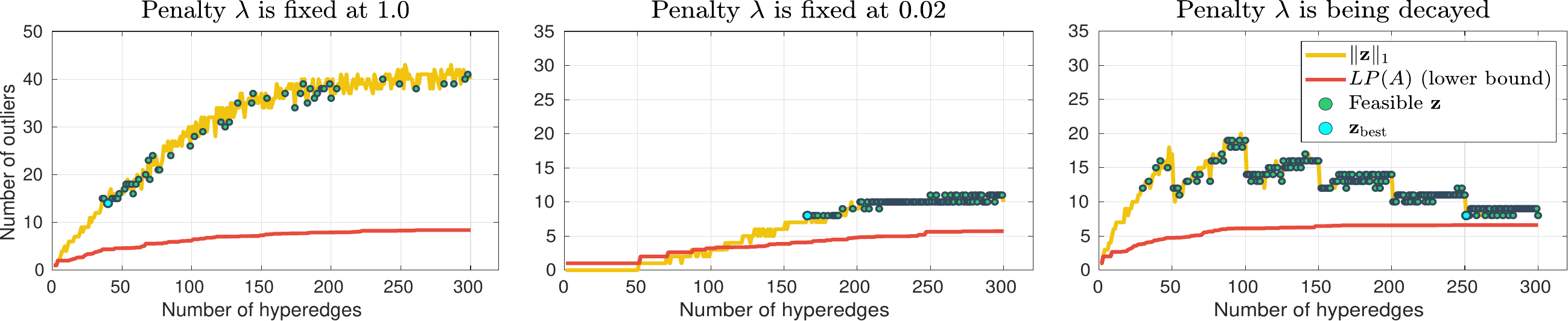}.
             
            }
            
            \subfloat[][KITTI 198-201]
            {
                \includegraphics[width=0.9\textwidth]{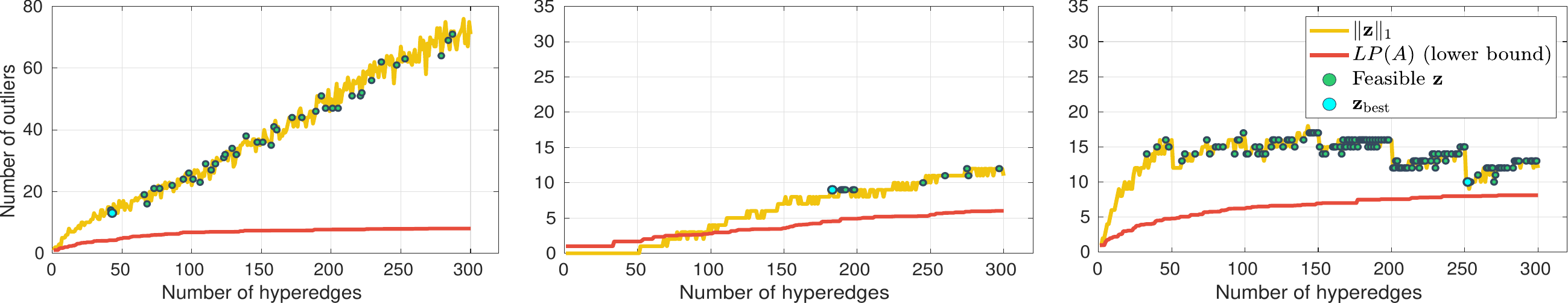}.
               
            }
            
            \subfloat[][KITTI 738-742]
            {
                \includegraphics[width=0.9\textwidth]{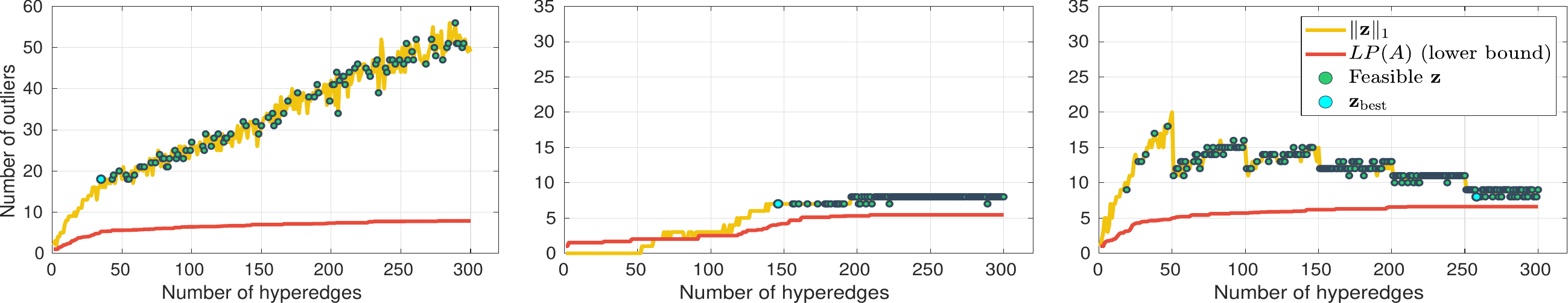}
               
            }

        \caption{Comparing fixed penalty $\pnt$ to penalty $\pnt$ being decayed. The error bound cannot be tightened with a large penalty (left), while Alg.~\ref{main_algo} requires more iterations to find a consensus set (middle). Therefore, decaying penalty can quickly find a consensus set and efficiently tighten the error bound (right).}
        \label{supp:fig:pnt_decay}
\end{figure*}

\section{Minor embedding} \label{supp:sec:embedding}

Before quantum annealing, minor embedding should be performed to embed $\bQ^\prime$ (Eq.~\eqref{supp:eq:final_qubo}) to the QPU topology.

\begin{figure}
    \centering
    \subfloat[][$N=20$]
    {
        \includegraphics[width=0.45\textwidth]{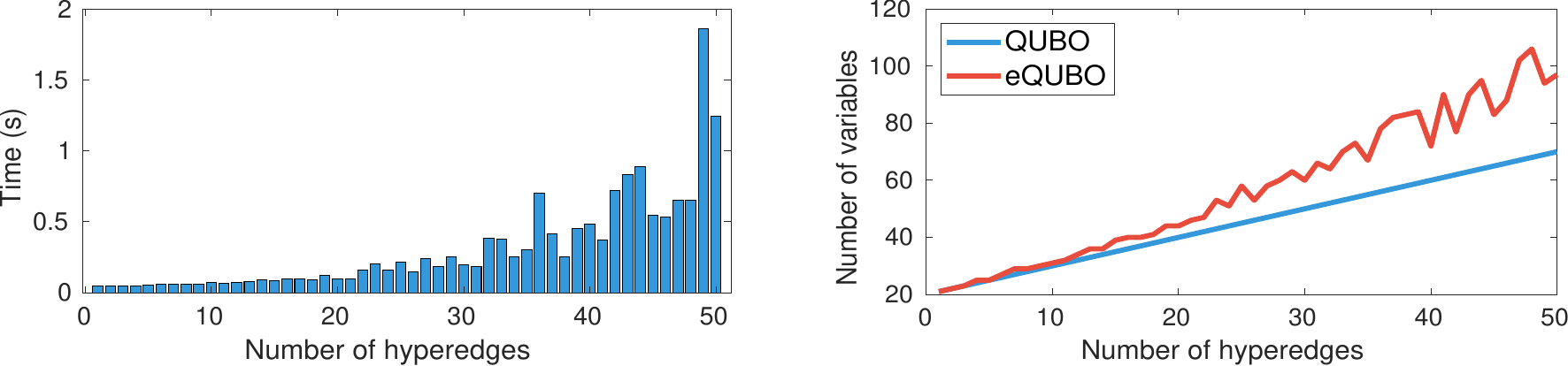}
    }
    
    \subfloat[][$N=50$]
    {
        \includegraphics[width=0.45\textwidth]{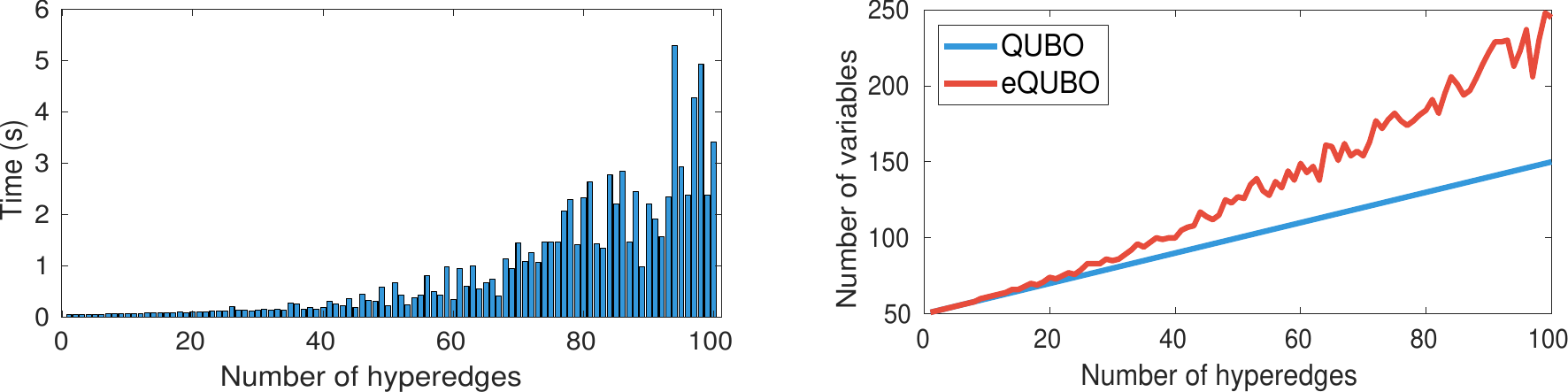}
    }
    
    \subfloat[][$N=100$]
    {
        \includegraphics[width=0.45\textwidth]{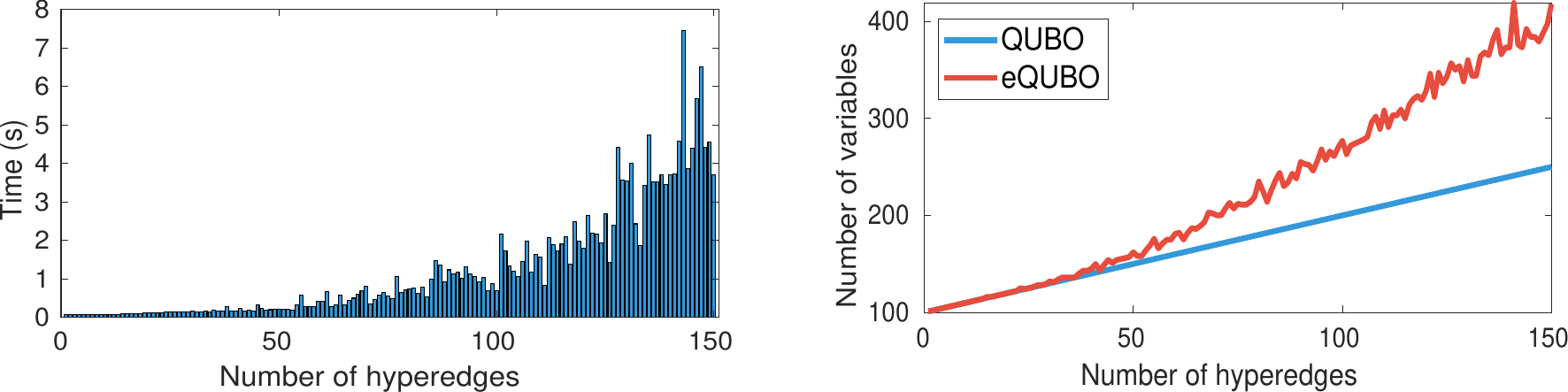}
    }
    
    \caption{The embedding time (left) and total number of variables (right) of QUBO~\eqref{supp:eq:final_qubo} before and after embedding to QPU topology (respectively denoted as QUBO and eQUBO).}
    \vspace{-1em}
    \label{supp:fig:embedding}
\end{figure}

To investigate the minor embedding, synthetic data with the setting same as that of Sec.~\ref{sec:synthetic} is generated, where $N = 20, 50, 100$ and outlier ratio $=0.2$. In every iteration of Alg.~\ref{main_algo}, we measure the embedding time and total number of variables of QUBO~\eqref{supp:eq:final_qubo} before and after embedding  (see Fig.~\ref{supp:fig:embedding}). In all cases, the embedding time and number of variables increase as more hyperedges are sampled. Also see Fig.~\ref{supp:fig:embedding_vis} for the visualisation of the embedding on the QPU.

\begin{figure*}
    \centering
    \subfloat[][(From left to right) $N$=5 (\#~hyperedges = 4), $N$=10 (\#~hyperedges = 16), $N$ = 20 (\#~hyperedges = 50), and $N$ = 30 (\#~hyperedges = 80).]
    {
        \includegraphics[width=1\textwidth]{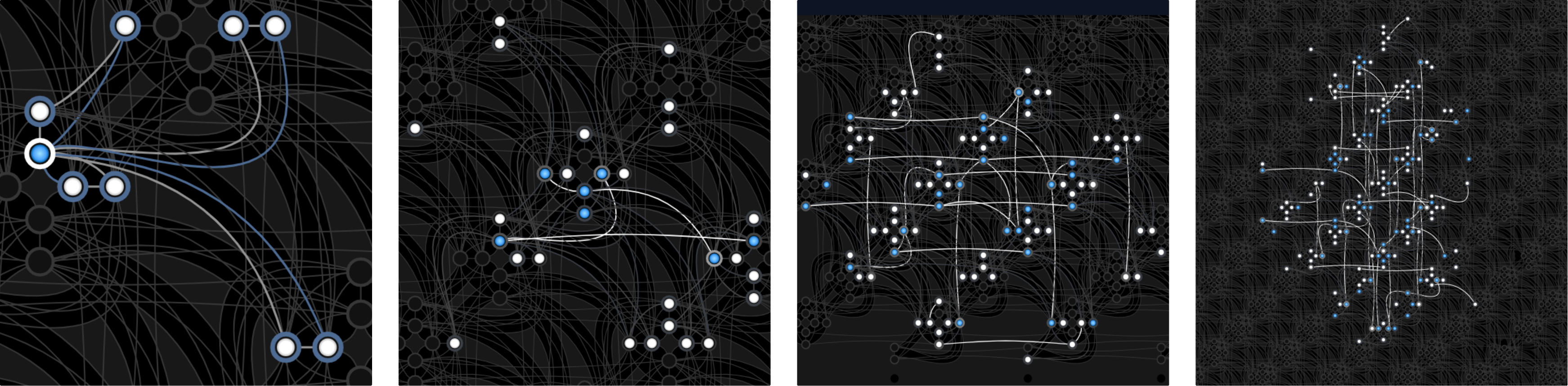}
    }
    
    \subfloat[][(From left to right) $N$=50 (\#~hyperedges = 100), and $N$=100 (\#~hyperedges = 150)]
    {
        \includegraphics[width=1\textwidth]{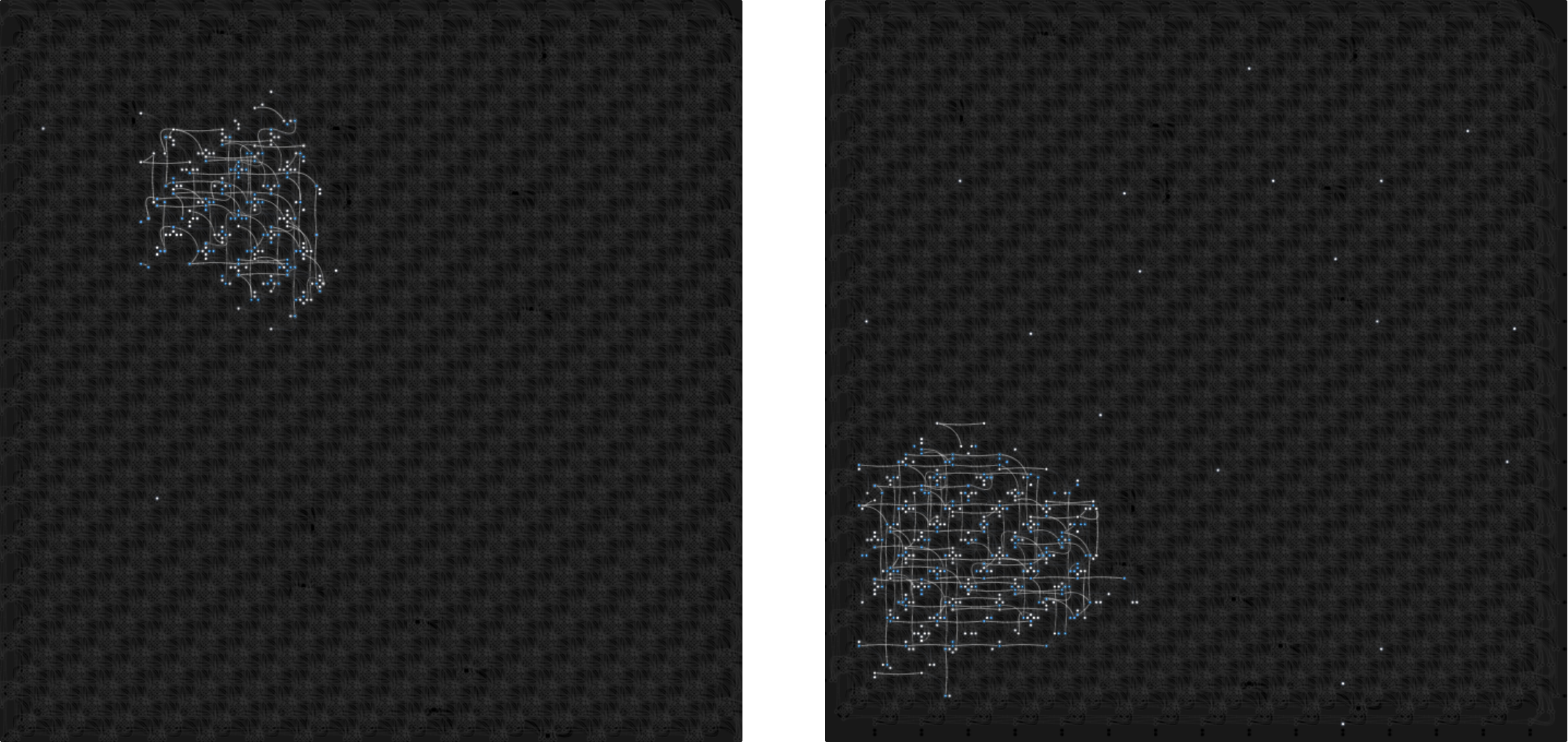}
    }

    \caption{Visualisation of the embedding using D-Wave problem inspector~\cite{dwaveinspector}. Each node represents a physical qubit. In each node, the colors in the outer rings represents the signs of qubit biases measured in the lowest energy state; and the inner colors represent the solutions. Edges represent the coupling strengths.  }
    \label{supp:fig:embedding_vis}
\end{figure*}

\section{More experimental results on real data} \label{supp:sec:real_data}
\paragraph{Fundamental matrix estimation.} Fig.~\ref{supp:fig:fund_bound} shows the intermediate outputs of Alg.~\ref{main_algo}-F on Zoom, Valbonne and KITTI 104-108. A same conclusion as Sec.~\ref{sec:experiment_fund} can be drawn.
\begin{figure*}[h]
    	\centering
		\includegraphics[width=1\textwidth]{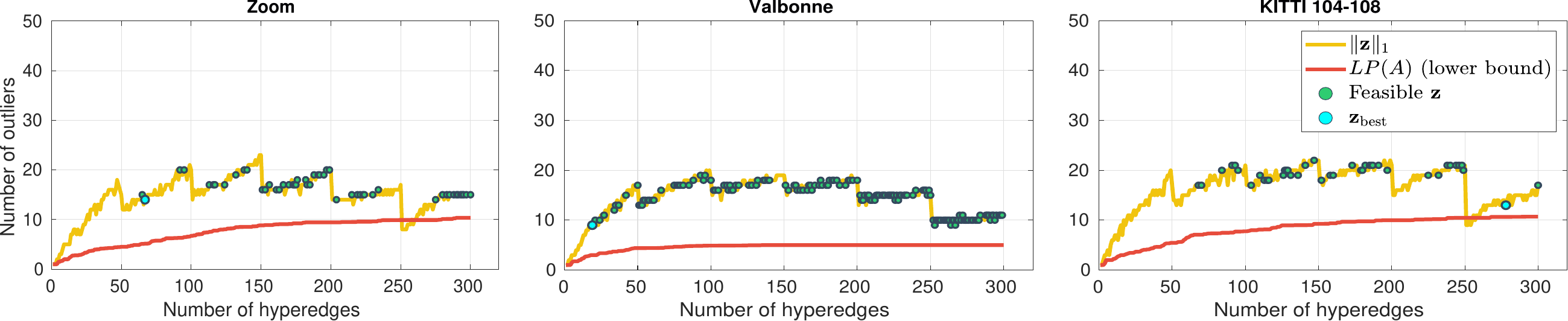}
    	\captionof{figure}{Fundamental matrix estimation, where number of outliers $\| \bz \|_1$ and lower bound $LP(A)$, plotted across the iterations of Alg.~\ref{main_algo}-F.}
    	\label{supp:fig:fund_bound}
\end{figure*}

\paragraph{Triangulation.} Fig.~\ref{supp:fig:triangulation_bound} shows the Alg.~\ref{main_algo}-F's intermediate outputs on Nikolai point 534, Linkoping point 14 and Tower point 3. A same conclusion as Sec.~\ref{sec:experiment_tri} can be drawn.
\begin{figure*}[h]
    	\centering
		\includegraphics[width=1\textwidth]{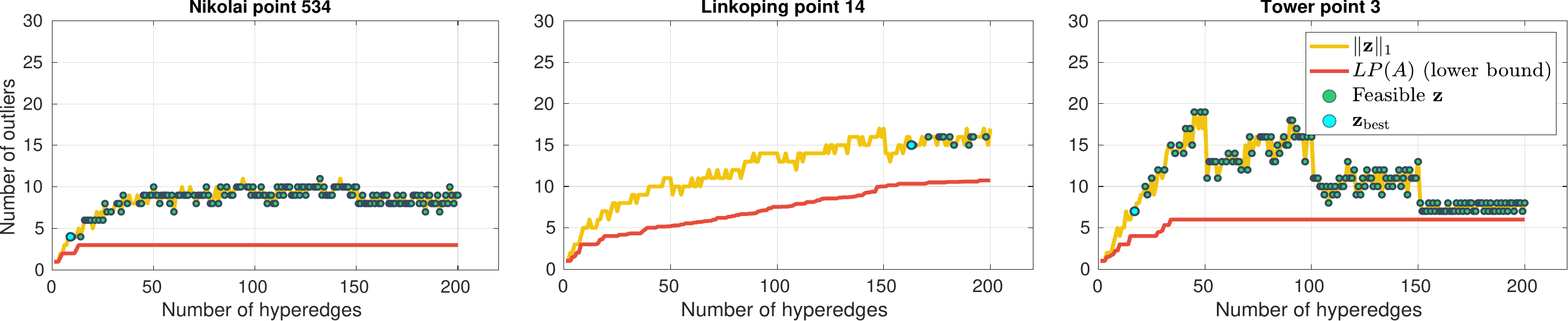}
    	\captionof{figure}{Multi-view triangulation, where number of outliers $\| \bz \|_1$ and lower bound $LP(A)$ plotted across the iterations of Alg.~\ref{main_algo}-F.}
    	\label{supp:fig:triangulation_bound}
\end{figure*}

\end{alphasection}
{\small
\bibliographystyle{ieee_fullname}
\bibliography{bib}

\begin{thebibliography}{10}\itemsep=-1pt

\bibitem{vlfeat}
{VLF}eat.
\newblock \url{https://www.vlfeat.org/}.
\newblock Accessed: 2021-11-02.

\bibitem{agarwal2005beyond}
Sameer Agarwal, Jongwoo Lim, Lihi Zelnik-Manor, Pietro Perona, David Kriegman,
  and Serge Belongie.
\newblock Beyond pairwise clustering.
\newblock In {\em IEEE Computer Society Conference on Computer Vision and
  Pattern Recognition}, 2005.

\bibitem{amenta}
Nina Amenta, Marshall Bern, and David Eppstein.
\newblock Optimal point placement for mesh smoothing.
\newblock {\em Journal of Algorithms}, 1999.

\bibitem{antonante2020outlier}
Pasquale Antonante, Vasileios Tzoumas, Heng Yang, and Luca Carlone.
\newblock Outlier-robust estimation: Hardness, minimally-tuned algorithms, and
  applications.
\newblock {\em IEEE Transactions on Robotics}, 2021.

\bibitem{graphcutrs}
Daniel Barath and Ji{\v{r}}{\'\i} Matas.
\newblock Graph-cut ransac.
\newblock In {\em Proceedings of the IEEE conference on computer vision and
  pattern recognition}, 2018.

\bibitem{magsacplusplus}
Daniel Barath, Jana Noskova, Maksym Ivashechkin, and Jiri Matas.
\newblock Magsac++, a fast, reliable and accurate robust estimator.
\newblock In {\em Proceedings of the IEEE/CVF conference on computer vision and
  pattern recognition}, 2020.

\bibitem{benkner2020adiabatic}
Marcel~Seelbach Benkner, Vladislav Golyanik, Christian Theobalt, and Michael
  Moeller.
\newblock Adiabatic quantum graph matching with permutation matrix constraints.
\newblock In {\em International Conference on 3D Vision}, 2020.

\bibitem{birdal2021quantum}
Tolga Birdal, Vladislav Golyanik, Christian Theobalt, and Leonidas~J Guibas.
\newblock Quantum permutation synchronization.
\newblock In {\em Proceedings of the IEEE/CVF Conference on Computer Vision and
  Pattern Recognition}, 2021.

\bibitem{boothby2016fast}
Tomas Boothby, Andrew~D King, and Aidan Roy.
\newblock Fast clique minor generation in chimera qubit connectivity graphs.
\newblock {\em Quantum Information Processing}, 2016.

\bibitem{brachmann2017dsac}
Eric Brachmann, Alexander Krull, Sebastian Nowozin, Jamie Shotton, Frank
  Michel, Stefan Gumhold, and Carsten Rother.
\newblock Dsac-differentiable ransac for camera localization.
\newblock In {\em Proceedings of the IEEE Conference on Computer Vision and
  Pattern Recognition}, 2017.

\bibitem{bravyi2020hybrid}
Sergey Bravyi, Alexander Kliesch, Robert Koenig, and Eugene Tang.
\newblock Hybrid quantum-classical algorithms for approximate graph coloring.
\newblock {\em arXiv preprint arXiv:2011.13420}, 2020.

\bibitem{Bretto2013}
Alain Bretto.
\newblock {\em Hypergraph Theory: An Introduction}.
\newblock Springer International Publishing, 2013.

\bibitem{cai2014practical}
Jun Cai, William~G Macready, and Aidan Roy.
\newblock A practical heuristic for finding graph minors.
\newblock {\em arXiv preprint arXiv:1406.2741}, 2014.

\bibitem{cai2019consensus}
Zhipeng Cai, Tat-Jun Chin, and Vladlen Koltun.
\newblock Consensus maximization tree search revisited.
\newblock In {\em Proceedings of the IEEE/CVF International Conference on
  Computer Vision}, 2019.

\bibitem{zhipeng2018ibco}
Zhipeng Cai, Tat-Jun Chin, Huu Le, and David Suter.
\newblock Deterministic consensus maximization with biconvex programming.
\newblock In {\em Proceedings of the European Conference on Computer Vision},
  2018.

\bibitem{caraiman2012image}
Simona Caraiman and Vasile Manta.
\newblock Image processing using quantum computing.
\newblock In {\em 16th International Conference on System Theory, Control and
  Computing}, 2012.

\bibitem{tj18nphard}
Tat-Jun Chin, Zhipeng Cai, and Frank Neumann.
\newblock Robust fitting in computer vision: Easy or hard?
\newblock In {\em Proceedings of the European Conference on Computer Vision},
  2018.

\bibitem{chin2015efficient}
Tat-Jun Chin, Pulak Purkait, Anders Eriksson, and David Suter.
\newblock Efficient globally optimal consensus maximisation with tree search.
\newblock In {\em Proceedings of the IEEE Conference on Computer Vision and
  Pattern Recognition}, 2015.

\bibitem{tj_book}
Tat-Jun Chin and David Suter.
\newblock The maximum consensus problem: recent algorithmic advances.
\newblock {\em Synthesis Lectures on Computer Vision}, 2017.

\bibitem{TJ_ACCV_2020}
Tat-Jun Chin, David Suter, Shin-Fang Ch'ng, and James Quach.
\newblock Quantum robust fitting.
\newblock In {\em Proceedings of the Asian Conference on Computer Vision},
  2020.

\bibitem{loransac}
Ond{\v{r}}ej Chum, Ji{\v{r}}{\'\i} Matas, and Josef Kittler.
\newblock Locally optimized ransac.
\newblock In {\em Joint Pattern Recognition Symposium}. Springer, 2003.

\bibitem{kittilicense}
{Creative Commons}.
\newblock {Attribution-NonCommercial-ShareAlike 3.0 License}.
\newblock \url{https://creativecommons.org/licenses/by-nc-sa/3.0/}.
\newblock Accessed: 2021-11-16.

\bibitem{dwaveinspector}
{D-Wave Systems, Inc.}
\newblock {D-Wave Problem Inspector}.
\newblock
  \url{https://docs.ocean.dwavesys.com/projects/inspector/en/latest/index.html}.
\newblock Accessed: 2021-11-14.

\bibitem{ice}
{D-Wave Systems, Inc.}
\newblock {Error Sources for Problem Representation}.
\newblock \url{https://docs.dwavesys.com/docs/latest/c_qpu_ice.html}.
\newblock Accessed: 2021-11-08.

\bibitem{dwaveadvantage1.1}
{D-Wave Systems, Inc.}
\newblock {QPU}-specific physical properties: Advantage\_system1.1.
\newblock
  \url{https://docs.dwavesys.com/docs/latest/_downloads/9a1b594d84370df6c0a09d00a5b72661/09-1237A-A_QPU_Properties_Advantage_system1_1.pdf}.
\newblock Accessed: 2021-11-02.

\bibitem{dwavesolverproperties}
{D-Wave Systems, Inc.}
\newblock {Solver Properties}.
\newblock \url{https://docs.dwavesys.com/docs/latest/c_solver_properties.html}.
\newblock Accessed: 2021-11-14.

\bibitem{whatisqadwave}
{D-Wave Systems, Inc.}
\newblock {What is Quantum Annealing?}
\newblock \url{https://docs.dwavesys.com/docs/latest/c_gs_2.html}.
\newblock Accessed: 2021-11-09.

\bibitem{dattani2019pegasus}
Nike Dattani, Szilard Szalay, and Nick Chancellor.
\newblock Pegasus: The second connectivity graph for large-scale quantum
  annealing hardware.
\newblock {\em arXiv preprint arXiv:1901.07636}, 2019.

\bibitem{dendukuri2019defining}
Aditya Dendukuri, Blake Keeling, Arash Fereidouni, Joshua Burbridge, Khoa Luu,
  and Hugh Churchill.
\newblock Defining quantum neural networks via quantum time evolution.
\newblock {\em Quantum Techniques in Machine Learning}, 2019.

\bibitem{dendukuri2018image}
Aditya Dendukuri and Khoa Luu.
\newblock Image processing in quantum computers.
\newblock {\em Quantum Techniques in Machine Learning}, 2019.

\bibitem{djidjev2018efficient}
Hristo~N Djidjev, Guillaume Chapuis, Georg Hahn, and Guillaume Rizk.
\newblock Efficient combinatorial optimization using quantum annealing.
\newblock {\em arXiv preprint arXiv:1801.08653}, 2018.

\bibitem{dorband2018extending}
John~E Dorband.
\newblock Extending the d-wave with support for higher precision coefficients.
\newblock {\em arXiv preprint arXiv:1807.05244}, 2018.

\bibitem{enqvist2011non}
Olof Enqvist, Fredrik Kahl, and Carl Olsson.
\newblock Non-sequential structure from motion.
\newblock In {\em IEEE International Conference on Computer Vision Workshops
  (ICCV Workshops)}, 2011.

\bibitem{eppstein}
David Eppstein.
\newblock Quasiconvex programming.
\newblock {\em Combinatorial and Computational Geometry}, 2005.

\bibitem{farhi2000quantum}
Edward Farhi, Jeffrey Goldstone, Sam Gutmann, and Michael Sipser.
\newblock Quantum computation by adiabatic evolution.
\newblock {\em arXiv preprint quant-ph/0001106}, 2000.

\bibitem{ransac}
Martin~A Fischler and Robert~C Bolles.
\newblock Random sample consensus: a paradigm for model fitting with
  applications to image analysis and automated cartography.
\newblock {\em Communications of the ACM}, 1981.

\bibitem{kitti}
Andreas Geiger, Philip Lenz, and Raquel Urtasun.
\newblock Are we ready for autonomous driving? the kitti vision benchmark
  suite.
\newblock In {\em IEEE conference on computer vision and pattern recognition},
  2012.

\bibitem{golyanik2020quantum}
Vladislav Golyanik and Christian Theobalt.
\newblock A quantum computational approach to correspondence problems on point
  sets.
\newblock In {\em Proceedings of the IEEE/CVF Conference on Computer Vision and
  Pattern Recognition}, 2020.

\bibitem{gurobi}
{Gurobi}.
\newblock \url{https://www.gurobi.com/}.
\newblock Accessed: 2021-11-02.

\bibitem{gyongyosi2019survey}
Laszlo Gyongyosi and Sandor Imre.
\newblock A survey on quantum computing technology.
\newblock {\em Computer Science Review}, 2019.

\bibitem{simanneal}
{Joseph T. Iosue}.
\newblock \url{https://qubovert.readthedocs.io/en/stable/}.
\newblock Accessed: 2021-11-02.

\bibitem{kahl08}
Fredrik Kahl and Richard Hartley.
\newblock Multiple-view geometry under the {$L_\infty$}-norm.
\newblock {\em IEEE Transactions on Pattern Analysis and Machine Intelligence},
  2008.

\bibitem{khoshaman2018quantum}
Amir Khoshaman, Walter Vinci, Brandon Denis, Evgeny Andriyash, Hossein Sadeghi,
  and Mohammad~H Amin.
\newblock Quantum variational autoencoder.
\newblock {\em Quantum Science and Technology}, 2018.

\bibitem{kim2021advancing}
Donggyu Kim, Pureum Noh, Hyun-Yong Lee, and Eun-Gook Moon.
\newblock Advancing hybrid quantum-classical algorithms via mean-operators.
\newblock {\em arXiv preprint arXiv:2107.07527}, 2021.

\bibitem{huu2017ep}
Huu Le, Tat-Jun Chin, and David Suter.
\newblock An exact penalty method for locally convergent maximum consensus.
\newblock In {\em Proceedings of the IEEE Conference on Computer Vision and
  Pattern Recognition}, 2017.

\bibitem{le2018non}
Huu Le, Anders Eriksson, Michael Milford, Thanh~Toan Do, Tat~Jun Chin, and
  David Suter.
\newblock Non-smooth m-estimator for maximum consensus estimation.
\newblock In {\em Proceedings of the British Machine Vision Conference}, 2018.

\bibitem{fixingloransac}
Karel Lebeda, Jir{\i} Matas, and Ondrej Chum.
\newblock Fixing the locally optimized ransac--full experimental evaluation.
\newblock In {\em British Machine Vision Conference}, 2012.

\bibitem{li2009consensus}
Hongdong Li.
\newblock Consensus set maximization with guaranteed global optimality for
  robust geometry estimation.
\newblock In {\em IEEE 12th International Conference on Computer Vision}, 2009.

\bibitem{li2020quantum}
Junde Li and Swaroop Ghosh.
\newblock Quantum-soft qubo suppression for accurate object detection.
\newblock In {\em European Conference on Computer Vision}, 2020.

\bibitem{lin2019hypergraph}
Shuyuan Lin, Guobao Xiao, Yan Yan, David Suter, and Hanzi Wang.
\newblock Hypergraph optimization for multi-structural geometric model fitting.
\newblock In {\em Proceedings of the AAAI Conference on Artificial
  Intelligence}, 2019.

\bibitem{liu2012efficient}
Hairong Liu and Shuicheng Yan.
\newblock Efficient structure detection via random consensus graph.
\newblock In {\em IEEE Conference on Computer Vision and Pattern Recognition},
  2012.

\bibitem{sift}
David~G Lowe.
\newblock Distinctive image features from scale-invariant keypoints.
\newblock {\em International journal of computer vision}, 2004.

\bibitem{neven2009nips}
Harmut Neven, Vasil~S Denchev, Marshall Drew-Brook, Jiayong Zhang, William~G
  Macready, and Geordie Rose.
\newblock Nips 2009 demonstration: Binary classification using hardware
  implementation of quantum annealing.
\newblock {\em Quantum}, 2009.

\bibitem{neven2012qboost}
Hartmut Neven, Vasil~S Denchev, Geordie Rose, and William~G Macready.
\newblock Qboost: Large scale classifier training withadiabatic quantum
  optimization.
\newblock In {\em Asian Conference on Machine Learning}, 2012.

\bibitem{nguyen2018image}
Nga~TT Nguyen and Garrett~T Kenyon.
\newblock Image classification using quantum inference on the d-wave 2x.
\newblock In {\em IEEE International Conference on Rebooting Computing}, 2018.

\bibitem{nguyen2021deep}
V Nguyen, SB Orbell, Dominic~T Lennon, Hyungil Moon, Florian Vigneau, Leon~C
  Camenzind, Liuqi Yu, Dominik~M Zumb{\"u}hl, G~Andrew~D Briggs, Michael~A
  Osborne, et~al.
\newblock Deep reinforcement learning for efficient measurement of quantum
  devices.
\newblock {\em npj Quantum Information}, 2021.

\bibitem{nocedalandwright}
Jorge Nocedal and Stephen Wright.
\newblock {\em Numerical optimization}.
\newblock Springer Science \& Business Media, 2006.

\bibitem{olsson2008polynomial}
Carl Olsson, Olof Enqvist, and Fredrik Kahl.
\newblock A polynomial-time bound for matching and registration with outliers.
\newblock In {\em IEEE Conference on Computer Vision and Pattern Recognition},
  2008.

\bibitem{parra2015guaranteed}
Alvaro Parra~Bustos and Tat-Jun Chin.
\newblock Guaranteed outlier removal for rotation search.
\newblock In {\em Proceedings of the IEEE International Conference on Computer
  Vision}, 2015.

\bibitem{perdomo2018opportunities}
Alejandro Perdomo-Ortiz, Marcello Benedetti, John Realpe-G{\'o}mez, and Rupak
  Biswas.
\newblock Opportunities and challenges for quantum-assisted machine learning in
  near-term quantum computers.
\newblock {\em Quantum Science and Technology}, 2018.

\bibitem{pham2011simultaneous}
Trung Pham, Tat-Jun Chin, Jin Yu, and David Suter.
\newblock Simultaneous sampling and multi-structure fitting with adaptive
  reversible jump mcmc.
\newblock {\em Advances in Neural Information Processing Systems}, 2011.

\bibitem{purkait2016clustering}
Pulak Purkait, Tat-Jun Chin, Alireza Sadri, and David Suter.
\newblock Clustering with hypergraphs: the case for large hyperedges.
\newblock {\em IEEE transactions on pattern analysis and machine intelligence},
  2016.

\bibitem{purkait2018maximum}
Pulak Purkait, Christopher Zach, and Anders Eriksson.
\newblock Maximum consensus parameter estimation by reweighted $\ell$1 methods.
\newblock In {\em International Workshop on Energy Minimization Methods in
  Computer Vision and Pattern Recognition}, 2018.

\bibitem{ranftl2018deep}
Ren{\'e} Ranftl and Vladlen Koltun.
\newblock Deep fundamental matrix estimation.
\newblock In {\em Proceedings of the European conference on computer vision},
  2018.

\bibitem{mvgbook}
{Richard Hartley} and {Andrew Zisserman}.
\newblock {\em Multiple View Geometry in Computer Vision}.
\newblock Cambridge University Press, 2004.

\bibitem{robertson1995graph}
Neil Robertson and Paul~D Seymour.
\newblock Graph minors. xiii. the disjoint paths problem.
\newblock {\em Journal of combinatorial theory, Series B}, 1995.

\bibitem{sasdelli2021quantum}
Michele Sasdelli and Tat-Jun Chin.
\newblock Quantum annealing formulation for binary neural networks.
\newblock {\em International Conference on Digital Image Computing Techniques
  and Applications}, 2021.

\bibitem{scherer2019mathematics}
Wolfgang Scherer.
\newblock {\em Mathematics of Quantum Computing}.
\newblock Springer, 2019.

\bibitem{seelbach2021q}
Marcel Seelbach~Benkner, Zorah L{\"a}hner, Vladislav Golyanik, Christof
  Wunderlich, Christian Theobalt, and Michael Moeller.
\newblock Q-match: Iterative shape matching via quantum annealing.
\newblock In {\em International Conference on Computer Vision}, 2021.

\bibitem{suter2020monotone}
David Suter, Ruwan Tennakoon, Erchuan Zhang, Tat-Jun Chin, and Alireza
  Bab-Hadiashar.
\newblock Monotone boolean functions, feasibility/infeasibility, lp-type
  problems and maxcon.
\newblock {\em arXiv preprint arXiv:2005.05490}, 2020.

\bibitem{tennakoon2021consensus}
Ruwan Tennakoon, David Suter, Erchuan Zhang, Tat-Jun Chin, and Alireza
  Bab-Hadiashar.
\newblock Consensus maximisation using influences of monotone boolean
  functions.
\newblock In {\em Proceedings of the IEEE/CVF Conference on Computer Vision and
  Pattern Recognition}, 2021.

\bibitem{mlesac}
Philip~HS Torr and Andrew Zisserman.
\newblock Mlesac: A new robust estimator with application to estimating image
  geometry.
\newblock {\em Computer vision and image understanding}, 2000.

\bibitem{tran2014sampling}
Quoc~Huy Tran, Tat-Jun Chin, Wojciech Chojnacki, and David Suter.
\newblock Sampling minimal subsets with large spans for robust estimation.
\newblock {\em International journal of computer vision}, 2014.

\bibitem{truong2021unsupervised}
Giang Truong, Huu Le, David Suter, Erchuan Zhang, and Syed~Zulqarnain Gilani.
\newblock Unsupervised learning for robust fitting: A reinforcement learning
  approach.
\newblock In {\em Proceedings of the IEEE/CVF Conference on Computer Vision and
  Pattern Recognition}, 2021.

\bibitem{venegas2003storing}
Salvador~E Venegas-Andraca and Sougato Bose.
\newblock Storing, processing, and retrieving an image using quantum mechanics.
\newblock In {\em Quantum Information and Computation}. International Society
  for Optics and Photonics, 2003.

\bibitem{vgg}
{Visual Geometry Group}.
\newblock \url{https://www.robots.ox.ac.uk/~vgg/data/}.
\newblock Accessed: 2021-11-02.

\bibitem{wen2019efficient}
Fei Wen, Rendong Ying, Zheng Gong, and Peilin Liu.
\newblock Efficient algorithms for maximum consensus robust fitting.
\newblock {\em IEEE Transactions on Robotics}, 2019.

\bibitem{yan2016survey}
Fei Yan, Abdullah~M Iliyasu, and Salvador~E Venegas-Andraca.
\newblock A survey of quantum image representations.
\newblock {\em Quantum Information Processing}, 2016.

\end{thebibliography}
}

\end{document}